\documentclass[twoside]{article}

\usepackage[accepted]{aistats2022}
\usepackage{natbib}
%
%


\setlength{\pdfpageheight}{11in}
\setlength{\pdfpagewidth}{8.5in}


\bibliographystyle{apalike}
\usepackage{booktabs}
\usepackage{graphicx}
\usepackage{amsmath}
\usepackage{amssymb}
\usepackage{amsfonts}
\usepackage{bm}
\usepackage{mathtools}
\usepackage{multirow}
\usepackage{relsize}
\usepackage{wrapfig,framed,caption}
\usepackage{soul}
\usepackage{xfrac}
\usepackage{amsthm}
\usepackage{thmtools,thm-restate}
\usepackage{hyperref}       
\usepackage{url}            
\usepackage{booktabs}       
\usepackage{amsfonts}       
\usepackage{nicefrac}       
\usepackage{microtype}      
\usepackage{xcolor}         
\usepackage{microtype}
\usepackage{graphicx}
\usepackage[scriptsize,hang]{subfigure}
\usepackage{booktabs} 
\usepackage{pgfplots}
\usepackage{enumitem}
\usepackage{relsize}
\usepackage{algorithm}
\usepackage[noend]{algpseudocode}
\usepackage[group-separator={,}]{siunitx}
\usepackage{csvsimple}
\usepackage{siunitx}
\usepackage[normalem]{ulem}
\usepackage{siunitx}
\newcommand{\srange}[2] {\lbrack\num{#1}, \num{#2}\rbrack}

\newcommand{\Delt}{\bm{\Delta}}
\DeclareMathOperator{\vect}{vec}

\DeclareMathOperator{\sign}{sign}
\DeclareMathOperator*{\argmin}{arg\,min}

\DeclareMathOperator{\softmax}{softmax}
\DeclareMathOperator{\dom}{dom}

\definecolor{darkgreen}{rgb}{0.09, 0.45, 0.27}
\definecolor{mygold}{rgb}{1,0.75,0}
\definecolor{myred}{rgb}{0.75,0,0}
\definecolor{mycharcoal}{RGB}{58,55,51}
\definecolor{myblue}{RGB}{52,57,176}

\newcommand{\new}{\text{\tiny{new}}}

\newcommand{\ah}{\hat{\a}}

\newcommand{\soft}{\text{\tiny SM}}

\newcommand{\R}{\mathbb{R}}

\newcommand{\one}{\bm{1}}

\renewcommand{\a}{\bm{a}}

\newcommand{\x}{\bm{x}}
\newcommand{\w}{\bm{w}}
\newcommand{\y}{\bm{y}}

\newcommand{\I}{\bm{I}}

\newcommand{\z}{\bm{z}}

\renewcommand{\H}{\bm{H}}
\newcommand{\yh}{\hat{\y}}

\newcommand{\dint}{\mathrm{d}}

\newcommand{\W}{\bm{W}}

\newcommand{\matching}{f}

\definecolor{niceblue_col}{RGB}{56,108,176}
\definecolor{nicered_col}{RGB}{215,48,39}
\definecolor{nicedarkred_col}{RGB}{215,48,39}

\newcommand{\Wt}{\widetilde{\W}}

\begin{document}

%
\runningtitle{LocoProp: Enhancing BackProp via Local Loss Optimization}

%
\runningauthor{Ehsan Amid, Rohan Anil, Manfred K. Warmuth}

\twocolumn[

\aistatstitle{LocoProp: Enhancing BackProp via Local Loss Optimization}
\aistatsauthor{ Ehsan Amid* \And Rohan Anil* \And Manfred K. Warmuth}
\aistatsaddress{ Google Research, Brain Team \And Google Research, Brain Team  \And Google Research, NY} ]
\begin{abstract}
  Second-order methods have shown state-of-the-art performance for optimizing deep neural networks. Nonetheless, their large memory requirement and high computational complexity, compared to first-order methods, hinder their versatility in a typical low-budget setup. This paper introduces a general framework of layerwise loss construction for multilayer neural networks that achieves a performance closer to second-order methods while utilizing first-order optimizers only. Our methodology lies upon a three-component loss, target, and regularizer combination, for which altering each component results in a new update rule. We provide examples using squared loss and layerwise Bregman divergences induced by the convex integral functions of various transfer functions. Our experiments on benchmark models and datasets validate the efficacy of our new approach, reducing the gap between first-order and second-order optimizers.
\end{abstract}

\section{INTRODUCTION}

Backpropagation (or \emph{BackProp} for short)~\citep{backprop} has been the prominent technique for training neural networks. BackProp is simply an expansion of the chain rule for calculating the derivative of the output \emph{loss function} with respect to the weights in each layer. The BackProp update involves a forward pass to calculate the network's activations given the input batch of training data. After the forward pass, the gradients of the loss function with respect to the network weights are \emph{backpropagated} from the output layer all the way to the input and the weights are updated by applying a single gradient step.

Stochastic gradient descent is the most basic update rule, which involves taking a step in the direction of the negative backpropagated gradients. The more advanced first-order techniques such as \mbox{AdaGrad}~\citep{adagrad}, RMSprop~\citep{Tieleman2012}, and Adam~\citep{kingma2014adam} involve preconditioning the gradient by a diagonal matrix, as well as incorporating momentum. In contrast, second-order optimizers such as \mbox{Shampoo}~\citep{gupta2018,anilpractical} and K-FAC~\citep{Heskes2000OnNL,martens2015optimizing,ba2016distributed} 
use Kronecker products to approximately form a full-matrix preconditioner, i.e., Full Matrix AdaGrad and Natural Gradient~\citep{amari}, respectively. Shampoo and K-FAC achieve significantly faster convergence than the first-order methods in terms of the number of steps and wall-time. However, large memory requirements ~\citep{anil2019,shazeer2018} and the high computational cost of calculating matrix inverses (inverse-$p$th roots, in the case of Shampoo) make these approaches prohibitive for larger models. These methods require further extensions such as block diagonalization to make them scalable, and demand high precision arithmetic for computing the inverses. Moreover, they require more resources and are difficult to  parallelize~\citep{anilpractical,ba2016distributed,osawa18}. The question remains whether it is possible to achieve a similar performance to second-order methods with first-order optimizers without any extra forward-backward passes per batch or explicitly forming the preconditioners.

In this paper, we propose a simple local loss construction and minimization approach to enhance the efficacy of every BackProp step that is easily scalable to modern large-scale architectures. Our method involves forming a local loss for each layer by fixing a target and then minimizing this loss iteratively. The updates for each layer are entirely decoupled and run in parallel. A single iteration on the local loss always recovers a single BackProp step. Thus, any further iterations on the local loss work towards enhancing the initial BackProp step. Our approach is comparable to first-order methods in terms of memory and computation requirements: It involves a single forward-backward pass on each batch, followed by a few parallelizable local iterations. Our approach significantly improves the convergence of first-order methods, thus reducing the gap between first-order and second-order optimization techniques.

\subsection{Related Work}
Alternative update rules or extensions to BackProp have been proposed over the years~\citep{perpinan,carreira2016parmac,admm-training,local-hessian,DecouplingBU,contrastive,cd-global,askari2018lifted,lifted_proximal,gu2020fenchel}. For instance, the \emph{Difference Target Propagation} method~\citep{targetprop} involves estimating a target value in each layer to approximate the BackProp gradient. The targets are formed via a separate network which performs as the inverse function. The main goal of such approaches is decoupled training~\citep{pmlr-v70-jaderberg17a} by approximating the gradients or introducing auxiliary variables. Such decoupled training approaches are not the focus of our current work. 

In this paper, we mainly focus on improving the performance of first-order optimizers~\citep{nesterov,adagrad,Tieleman2012,kingma2014adam} to match the performance of their second-order contenders, namely K-FAC~\citep{Heskes2000OnNL,martens2015optimizing} and Shampoo~\citep{gupta2018,anilpractical}, via a local loss construction approach. The local problems are solved approximately using first-order optimizers, while we show that the exact solutions to these problems recover second-order update rules such as K-FAC. Our construction also recovers Proximal Backpropagation~\citep{proximal}, which builds upon the idea of local squared loss minimization, as a special case (analysis relegated to the appendix). Closely related is the work of~\cite{johnson2020guided} which proposes finding 
a guide function that is ahead of the current model with respect to the loss minimization and pushing the model towards the guide. They show that this construction improves training efficiency and is related to self-distillation.

\subsection{Notation}
We adopt the following notation throughout the paper. We use $(\x, \y)$ to denote the input instance and target label pair. For an $M$-layer neural network and at a given layer $m \in [M]$, we use $\ah_m$ (respectively, $\yh_m$) and $\a_m$ (respectively, $\y_m$) 
for the predicted and target pre (post)-activations, respectively.
Note that $\ah_m = \W_m\,\yh_{m - 1}$ and $\yh_m = f_m(\ah_m)$ where $\W_m$ is the weight matrix\footnote{We assume that biases are incorporated in the weights.} and $f_m$ is an elementwise 
non-decreasing transfer function. (Also using this
notation, we have $\yh_0 = \x$ and $\yh = \yh_M$, but we distinguish between
$\y_M$ and $\y$.)
We denote the loss of the network in the
final layer (which is used for training with standard BackProp) as $L(\y, \yh) = L(\y, \yh_M)$. We make no assumption on the final loss $L$ other than differentiability.

Lastly, $\odot$ and $\otimes$ denote Hadamard product and Kronecker product, respectively, and $\mathbb{I}$ denotes the indicator function.

\subsection{Our Contributions}
\begin{itemize}[leftmargin=4mm]
\vspace{-0.1cm}
\setlength\itemsep{0em}
\item We introduce a layerwise loss construction framework that relies on a three-component loss, target, and regularizer combination. Our construction allows flexibility in setting each component to recover multiple different update rules.
\item We discuss the motivation behind several combinations and show the connection to implicit gradient updates, Proximal Backpropagation, and second-order methods such as K-FAC. In each case, we show that the combination is set such that a single iteration on the local objective recovers BackProp (or a more advanced update such as natural gradient descent~\citep{amari}), while applying further iterations recovers a second-order update.
\item Experimentally, we show that performing additional iterations significantly improves convergence of the first-order methods on a benchmark optimization problem, thus reducing the gap between first-order and second-order techniques. After computing the BackProp gradients, our method is embarrassingly parallelizable across layers~\citep{nielsen2016introduction}.
\end{itemize}
\begin{figure}[t!]
    \centering
    \includegraphics[width=0.35\textwidth]{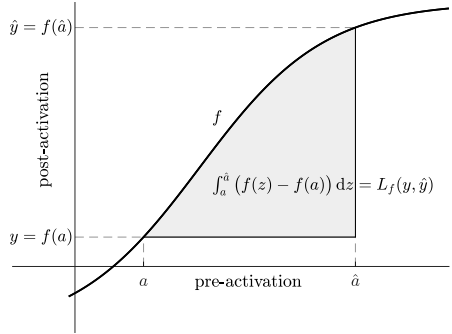}
\vspace{-0.2cm}
\caption{The matching loss $L_f$ of an 
    increasing transfer function $f$ as an integral under the curve $f$
    which is always convex in $\hat{a}$.}\label{fig:matching}
    \vspace{-0.1cm}
\end{figure}
In particular, we will discuss in detail two variants of our local loss construction approach. The first method relies on using the squared loss as the local loss. Except for setting the targets, the squared loss construction approach uses no information about the local transfer function. Instead, the second variant, based on the \emph{matching loss}, adapts the local loss function to the transfer function of the layer.

Matching loss was initially introduced as a line integral~\citep{match} and was later motivated 
via a convex duality argument involving Bregman divergences~\citep{multidim,bitemp}. 
Specifically, consider an
elementwise strictly increasing transfer function $f:\,
\R^d \rightarrow \R^d$. Given input $\x \in \R^n$ and
target label $\y \in \R^d$, let $\ah \in \R^d$ be the predicted
pre (transfer function) activation produced by the model 
(for instance, the linear activation before softmax in a single-layer logistic regression classifier). 
The (post-activation) prediction of the model is given by $\yh = f(\ah)$
(in this example, the softmax probabilities). 
The goal of training is to make the model prediction $\yh$
closer to the target label $\y$
by means of minimizing a measure of
discrepancy, commonly known as a loss function. In particular, the
\emph{matching loss} of the transfer function $f$ between
the target $\y$ and the prediction $\yh = \yh(\ah)$ is
defined as the following line integral of $f$,
\begin{equation}
    \label{eq:matching}
    L_{\matching}\big(\y, \yh(\ah)) \doteq \int_{\a}^{\ah} \big(f(\z) - f(\a)\big)^\top \dint \z\,,
\end{equation}
where $\a = f^{-1}(\y)$ is the target pre-activation. 
Figure~\ref{fig:matching} depicts this integral when $d=1$.
By definition, the gradient of the matching loss with respect to $\ah$ 
admits a simple form in terms of the difference between the prediction and the target:
\begin{equation}
\label{eq:delta}
\nabla_{\ah} L_{\matching}\big(\y, \yh(\ah)) = f(\ah) - f(\a) = \yh - \y\, .
\end{equation}
Consequently, when $f$ is elementwise strictly increasing, then the local matching loss is strictly convex with respect to $\ah$. 
Note that the majority of the non-linear transfer functions used
in practice (e.g., leaky ReLU, $\tanh$, softmax, etc.) are indeed
(elementwise) strictly increasing and thus their matching losses are strictly convex.%
\footnote{The integral of the softmax function, known as ``log-sum-exp'', is strictly convex in $\R^d - \{\pm c\one_d,\, c \in \R_+\}$.} 

As we shall see, the logistic loss is one of the main examples of a matching loss.
When the logistic loss (i.e., softmax transfer function and KL (or cross entropy) divergence) is used for multilayer neural networks,
then the overall loss is strictly convex in the weights of the last layer.
Nonetheless, the convexity does not necessarily extend to the weights of the layers below. These weights are then updated by backpropagating the gradient of the \emph{same} final loss. As one of our main contributions, we extend this construction to layerwise convex losses.

\section{LOCAL LOSS OPTIMIZATION}
Given an input example\footnote{The construction generalizes to a batch of examples trivially.} $\x$, our local loss construction framework splits an $M$-layer feedforward neural network into independent layerwise (i.e., single neuron) problems. In layer $m\in[M]$, we fix the input to the layer $\yh_{m-1}$ and define a target $\a_m$ (or $\y_m = f_m(\a_m)$), which lies in the pre (or post) activation domain. The update proceeds by minimizing a loss between the current prediction of the layer and a target $\a_m$, plus a regularizer term on the weights,
\begin{equation*}
    \W^{\new} = \argmin_{\Wt}\big\{\underbrace{\!D(\Wt\yh_{m-1}, \a_m)}_{\text{loss}} + \underbrace{R(\Wt, \W)}_{\text{regularizer}}\big\},
\end{equation*}
\vspace{-0.5cm}

where $D$ denotes a divergence function and $R$ is a regularizer (e.g., a squared $\mathrm{L}_2$-norm). This formulation allows flexibility in term of setting the loss (i.e., divergence function), the target, and the regularizer term. In the next section, we will show how changing each component results in a different update rule, one of which recovers the K-FAC method. In each case, we pick the combination such that a single fixed-point iteration on the objective recovers standard gradient descent (i.e., BackProp) or a more advanced update such as natural gradient descent~\citep{amari}. 
We will refer to our \textbf{Loc}al Loss \textbf{O}ptimization framework as \textbf{LocoProp}.

\subsection{Local Squared Loss}
\label{sec:locos}
We now motivate the more basic approach for constructing the local problems using a squared loss. Given the current pre-activation $\ah_m = \W_m\,\yh_{m-1}$ at layer $m \in [M]$, we first define the \emph{gradient descent (GD) target} of the layer as $\a_m = \ah_m - \gamma\, \nabla_{\ah_m} L(\y, \yh)$ where $\gamma > 0$ is an activation step size. That is, our target corresponds to a GD step on the current pre-activation using the gradient of the final loss with respect to the pre-activation. Keeping the input to the layer $\yh_{m-1}$ fixed, our local optimization problem at layer $m \in [M]$ consists of minimizing the squared loss between the new pre-activation $\Wt\yh_{m-1}$ and the GD target $\a_m$, plus a squared $\mathrm{L}_2$-norm regularizer which keeps the updated weights close to the current weights,
\begin{equation}
    \label{eq:squared_obj}
    \sfrac{1}{2}\, \Vert\Wt\yh_{m-1} - \a_m\Vert^2 + \sfrac{1}{2\eta}\, \Vert \Wt - \W_m\Vert^2\, .
\end{equation}
\vspace{-0.6cm}

Here, $\eta > 0$ controls the trade-off between minimizing the loss and the regularizer. Setting the derivative of the objective to zero, we can write $\W_m^{\new}$ as the solution of a fixed point iteration,
\begin{equation}
\label{eq:gd_fixed}
    \W^{\new}_m = \W_m - \eta\, \big(\W^{\new}_m\yh_{m-1} - \a_m\big)\,\yh_{m-1}^\top\, .
\end{equation}
Interestingly, one iteration over Eq. \eqref{eq:gd_fixed} by replacing $\W_m^{\new}$ with $\W_m$ on the r.h.s. yields,
\begin{align}
\W^{\new}_m & \approx \W_m - \eta\, \big(\W_m\yh_{m-1} - \a_m\big)\,\yh_{m-1}^\top\nonumber\\
& = \W_m - \eta\, \big(\W_m\yh_{m-1}\nonumber\\
& \qquad\quad\quad - (\W_m\yh_{m-1} -\gamma\, \nabla_{\ah_m} L(\y, \yh))\big)\,\yh_{m-1}^\top\nonumber\\
& = \W_{m} - \eta\, \gamma\,
    \nabla_{\ah_m} L(\y, \yh)\,\,\yh_{m-1}^\top\nonumber\\
    & = \W_{m} - \eta_e\,
    \frac{\partial L(\y, \yh)}{\partial
       \ah_m}\frac{\partial \ah_m}{\partial \W_m}\nonumber\tag{BackProp}\, . 
\end{align}
Thus, a single iteration on the objective in Eq~\eqref{eq:squared_obj} recovers BackProp, with an effective learning rate of $\eta_e \doteq \eta\, \gamma > 0$. Nonetheless, Eq.~\eqref{eq:gd_fixed} can in fact be solved in a closed-form as,
\begin{equation*}
    \W_m^{\new} = \W_m - \eta_e\,\nabla_{\W_m} L(\y, \yh)\,\big(\I + \eta\,\yh_{m-1}\yh_{m-1}^\top\big)^{-1},
\end{equation*}
which corresponds to an \emph{implicit gradient} update~\citep{hassibi,pnorm}. Implicit updates can be motivated as the backward Euler approximation of the continuous-time gradient flow~\citep{reparam} and generally provide faster convergence compared to their approximate (explicit) counterpart~\citep{amid2020implicit}. The matrix preconditioner which trails the BackProp gradient also corresponds to the right preconditioner that appears in the K-FAC update rule (see the appendix). In fact, the following proposition states that we can also recover the full K-FAC update as a special case.
\begin{restatable}{proposition}{loco-s-kfac}
\label{prop:loco-s-kfac}
The update that minimizes Eq.~\eqref{eq:squared_obj} with a natural gradient descent target $\a_m = \ah_m - \gamma\, \bm{F}^{-1}_m \nabla_{\ah_m} L(\y, \yh)$, in which $\bm{F}_m$ is the Fisher Information matrix treating the pre-activations as parameters, recovers the K-FAC update.
\end{restatable}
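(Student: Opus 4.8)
The plan is to solve the local problem exactly in closed form — just as for the gradient-descent target that leads to Eq.~\eqref{eq:gd_fixed}, but now carrying along the extra factor $\bm{F}_m^{-1}$ that the natural-gradient target injects into $\a_m$ — and then to read off the resulting two-sided preconditioner as the Kronecker-factored preconditioner of K-FAC.

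First I would set the gradient of the objective in Eq.~\eqref{eq:squared_obj} with respect to $\Wt$ to zero. The objective is strictly convex in $\Wt$, so its unique minimizer is the stationary point, which satisfies $\big(\Wt\yh_{m-1} - \a_m\big)\yh_{m-1}^\top + \tfrac1\eta\big(\Wt - \W_m\big) = \zero$, i.e.
\begin{equation*}
\Wt\big(\I + \eta\,\yh_{m-1}\yh_{m-1}^\top\big) = \W_m + \eta\,\a_m\yh_{m-1}^\top\, .
\end{equation*}
Next I would substitute the natural-gradient target $\a_m = \ah_m - \gamma\,\bm{F}_m^{-1}\nabla_{\ah_m} L(\y,\yh)$, using $\ah_m = \W_m\yh_{m-1}$, which splits the right-hand side as $\W_m\big(\I + \eta\,\yh_{m-1}\yh_{m-1}^\top\big) - \eta\gamma\,\bm{F}_m^{-1}\nabla_{\ah_m} L(\y,\yh)\,\yh_{m-1}^\top$. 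The first term cancels the Gram factor after it is inverted, and the chain-rule identity $\nabla_{\ah_m} L(\y,\yh)\,\yh_{m-1}^\top = \nabla_{\W_m} L(\y,\yh)$ converts the second term into the BackProp gradient, giving the closed form
\begin{equation*}
\W_m^{\new} = \W_m - \eta_e\,\bm{F}_m^{-1}\,\nabla_{\W_m} L(\y,\yh)\,\big(\I + \eta\,\yh_{m-1}\yh_{m-1}^\top\big)^{-1}\, ,\qquad \eta_e \doteq \eta\gamma\, .
\end{equation*}
This is the GD-target closed form stated just before the proposition, except for the added left factor $\bm{F}_m^{-1}$.

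Then I would match this against the K-FAC update. Writing K-FAC's weight-space Fisher approximation as $\A_m\otimes\G_m$ and using $(\A_m\otimes\G_m)^{-1}\vect(\nabla_{\W_m} L) = \vect(\G_m^{-1}\nabla_{\W_m} L\,\A_m^{-1})$, the K-FAC step (with learning rate $\gamma$) reads $\W_m^{\new} = \W_m - \gamma\,\G_m^{-1}\nabla_{\W_m} L(\y,\yh)\,\A_m^{-1}$. By construction $\G_m$ is the (damped) covariance of the pre-activation gradients under the model's own output distribution, which is exactly the pre-activation Fisher $\bm{F}_m$, so the left factors already agree. And $\A_m$ is the (damped) second moment of the layer inputs; for a single example this is $\yh_{m-1}\yh_{m-1}^\top$, so with Tikhonov damping $\tfrac1\eta\I$ we get $\A_m^{-1} = \eta\big(\I + \eta\,\yh_{m-1}\yh_{m-1}^\top\big)^{-1}$, and the scalar $\eta$ is absorbed by $\eta_e = \eta\gamma$. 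Hence the right factors agree as well and the two updates coincide.

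The algebra in the first part is a routine bookkeeping extension of the GD-target computation already carried out in the text, so I expect the only delicate step to be the matching: one has to fix a single convention for the Kronecker vectorization (so that the inverse of $\A_m\otimes\G_m$ genuinely acts by left/right matrix multiplication) and be explicit about the damping and the single-example rank-one approximation under which the K-FAC factors $\A_m$ and $\G_m$ reduce to $\tfrac1\eta\I + \yh_{m-1}\yh_{m-1}^\top$ and to $\bm{F}_m$, respectively. The observation that the trailing preconditioner of the GD-target solution already coincides with the right K-FAC factor makes it natural to reuse the appendix's detailed K-FAC correspondence to close this gap cleanly.
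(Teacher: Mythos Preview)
Your proposal is correct and follows essentially the same route as the paper's appendix: solve the local squared problem in closed form with the natural-gradient target to obtain $\W_m^{\new} = \W_m - \eta_e\,\bm{F}_m^{-1}\nabla_{\W_m}L(\y,\yh)\,(\I + \eta\,\yh_{m-1}\yh_{m-1}^\top)^{-1}$, and then identify the left factor with K-FAC's pre-activation Fisher $\bm{D}$ and the right factor with the (damped) input second moment $\bm{X}$. You are slightly more explicit than the paper about the Kronecker vectorization identity and the single-example/damping caveats needed for the matching, which is a welcome clarification rather than a different argument.
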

Instead of forming the matrix inverse, which is computationally expensive for large layers, we apply the fixed-point update in Eq.~\eqref{eq:gd_fixed} for a certain number of iterations that suits the computational budgets. We refer to our local squared loss construction approach with GD targets as \emph{LocoProp-S} for short.

\subsection{Matching Loss}
We now extend the local loss construction in the previous section by replacing the squared loss between the pre-activations and the targets with a Bregman divergence which is tailored to the transfer function of each layer. We start by reviewing the idea of a matching loss of a transfer function and show that when using a matching loss, the local problem remains convex with respect to the weights in each layer. Additional LocoProp variants based on dual of the matching loss as well as those obtained by using different weight regularizers are given in the appendix.
\subsubsection{Bregman Divergence}

The integral in Eq.~\eqref{eq:matching} expands
to a \emph{Bregman divergence}~\citep{bregman}
induced by the strictly convex
integral function $F:\, \R^d \rightarrow \R$ such that $f = \nabla F$:
\begin{equation}
\label{eq:bregman}
\begin{split}
L_{\matching}&\big(\y, \yh(\ah)\big) = \Big(F(\z) - f(\a)^\top \z\Big)\Big\vert_{\a}^{\ah}\\
& = F(\ah) - F(\a) - f(\a)^\top (\ah - \a)
\doteq D_F(\ah, \a)\, .
\end{split}
\end{equation}
Bregman divergences are non-negative distance measures that satisfy many desirable properties including:
\textbf{(I) Convexity:} $D_F(\ah, \a)$ is always convex
in the first argument, but not necessarily in the second argument.
\textbf{(II) Duality:} $D_F(\ah, \a) = D_{F^*}(\y,
    \yh)$ where $F^*(\y) \doteq \sup_{\z} \{\y\cdot\z -
	F(\z)\}$ is the Fenchel dual~\citep{urruty} of the
	convex function $F$, and $(\ah, \yh)$ and $(\a,
	\y)$ are pairs of dual points,
    $\yh = f(\ah)\,,\, \ah = f^*(\yh)\,,\,
    \y = f(\a)\,,\, \a = f^*(\y)\, ,$
where $f^* = \nabla F^* = f^{-1}$.
\textbf{(III) Strict non-negativity:} $D_F(\ah, \a) \geq 0$ for all $\ah, \a \in \dom(F)$ and $D_F(\ah, \a) = 0$ iff $\ah = \a$.

Since Bregman divergence are convex with respect to the first argument (Property (I)),
the rewrite \eqref{eq:bregman} of the matching loss as a
Bregman divergence implies that this loss is convex in the pre-activation $\ah$.
Additionally, using the duality argument (Property (II)), we
have $L_{\matching}\big(\y, \yh\big) = D_{F^*}(\y,
\yh)$. Consequently, Property (III) ensures that
$L_{\matching}\big(\y, \yh\big) \geq 0$ and when $f$ is
(elementwise) strictly increasing, $L_{\matching}\big(\y, \yh\big) = 0$ iff $\y = \yh$ (or $\ah = \a$).

A classical example of a matching loss is the commonly used
logistic loss for classification. The logistic loss amounts
to the relative entropy divergence (a.k.a. KL divergence)%
\footnote{The last two terms cancel when $\y$ and $\yh$ are
probability distributions.}:
\begin{equation}
\label{eq:kl}
\text{KL}(\y, \yh) \doteq \sum_i y_i\log\frac{y_i}{\hat{y}_i} - y_i + \hat{y}_i\, ,
\end{equation}
between the target $\y$ and the softmax probabilities, $\yh = f_{\soft}(\ah) = \softmax(\ah) \doteq \frac{\exp(\ah)}{\sum_i \exp \hat{a}_i}$.
The integral of the softmax function is the so called ``log-sum-exp'' function, $F_{\soft}(\a) = \log\sum_i\exp(a_i)$. KL divergence is a Bregman divergence induced by the negative Shannon entropy function $F_{\soft}^*(\y) = \sum_i (y_i \log y_i - y_i)$, which is in fact the Fenchel dual of $F_{\soft}$ (restricted to the unit simplex~\citep{thesis}). Thus, by Property (II), the KL divergence in Eq.~\eqref{eq:kl} between the label $\y$ and the output $\yh$ is equal to the Bregman divergence induced by the convex function $F_{\soft}$ formed between the pre-activation of the final layer $\ah$ and the target $\a = f_{\soft}^{-1}(\y) = \log \y - \frac{1}{d} \sum_i \log y_i\,\one$,
\[
D_{F_{\soft}}(\ah, \a) = \log\frac{\sum_i \exp(\hat{a}_i)}{\sum_i \exp(a_i)} - \sum_i \frac{\exp(a_i)\, (\hat{a}_i - a_i)}{\sum_j \exp(a_j)}\, .
\]
Consequently, neural networks trained with logistic loss (i.e., softmax transfer function paired with KL divergence) aim to minimize the matching loss of the softmax transfer function\footnote{Although the target pre-activation $\a$ may be unbounded in the case of a one-hot $\y$ vector, the construction still carries out.}, \[
L_{f_{\soft}}(\y, \yh) = \text{KL}(\y, \yh) = D_{F_{\soft}}(\ah, \a)\, .
\]
Also, the matching loss ensures convexity with respect to to the weights
of the last layer since $\ah$ is linear in the weights, i.e., $\ah = \ah_M = \W\yh_{M-1}$ where $\yh_{M-1}$ is the input to the last layer.

\begin{figure}[t!]
\begin{center}
    \subfigure[]{\includegraphics[height=0.465\linewidth]{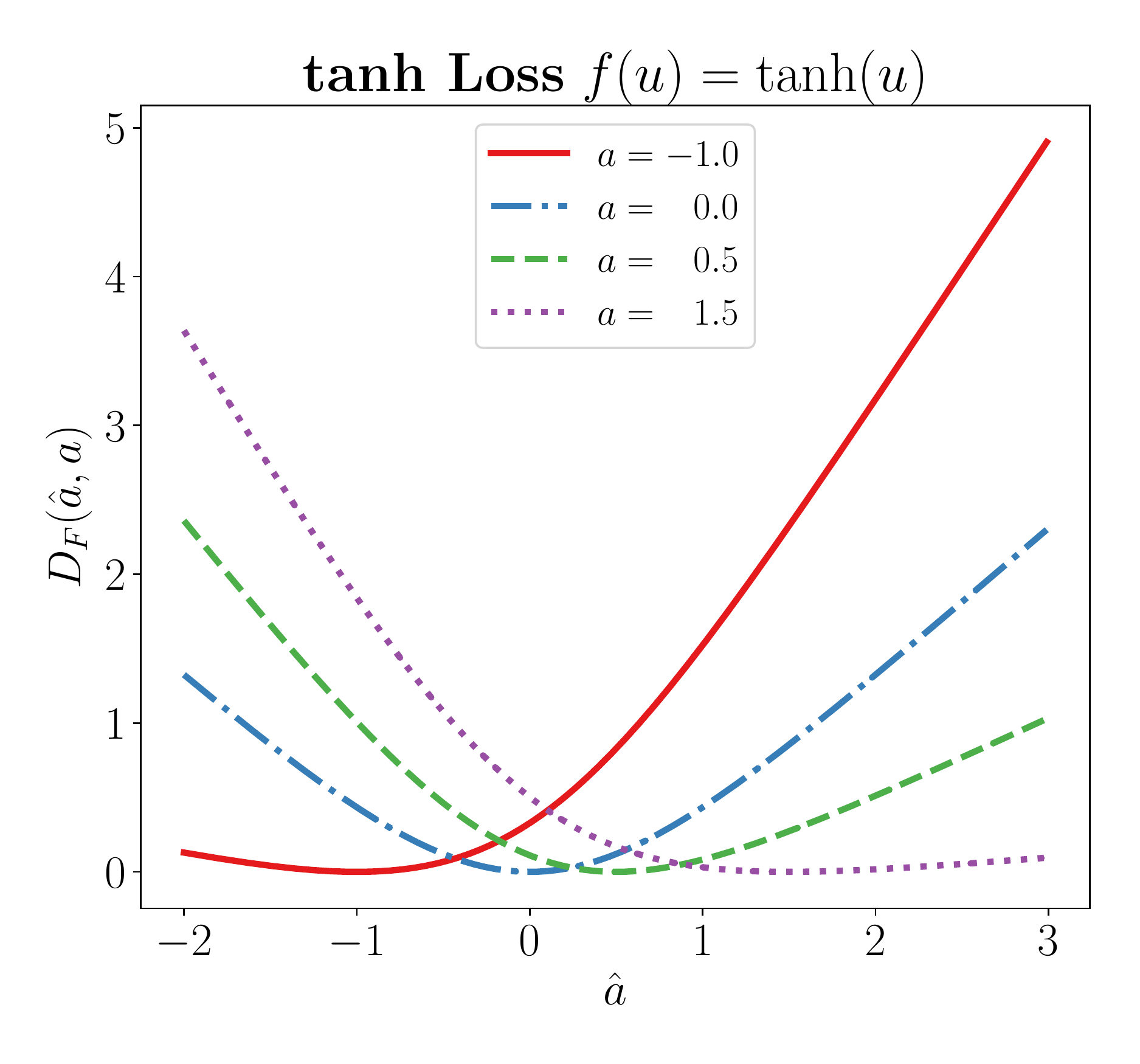}}  \subfigure[]{\includegraphics[height=0.465\linewidth]{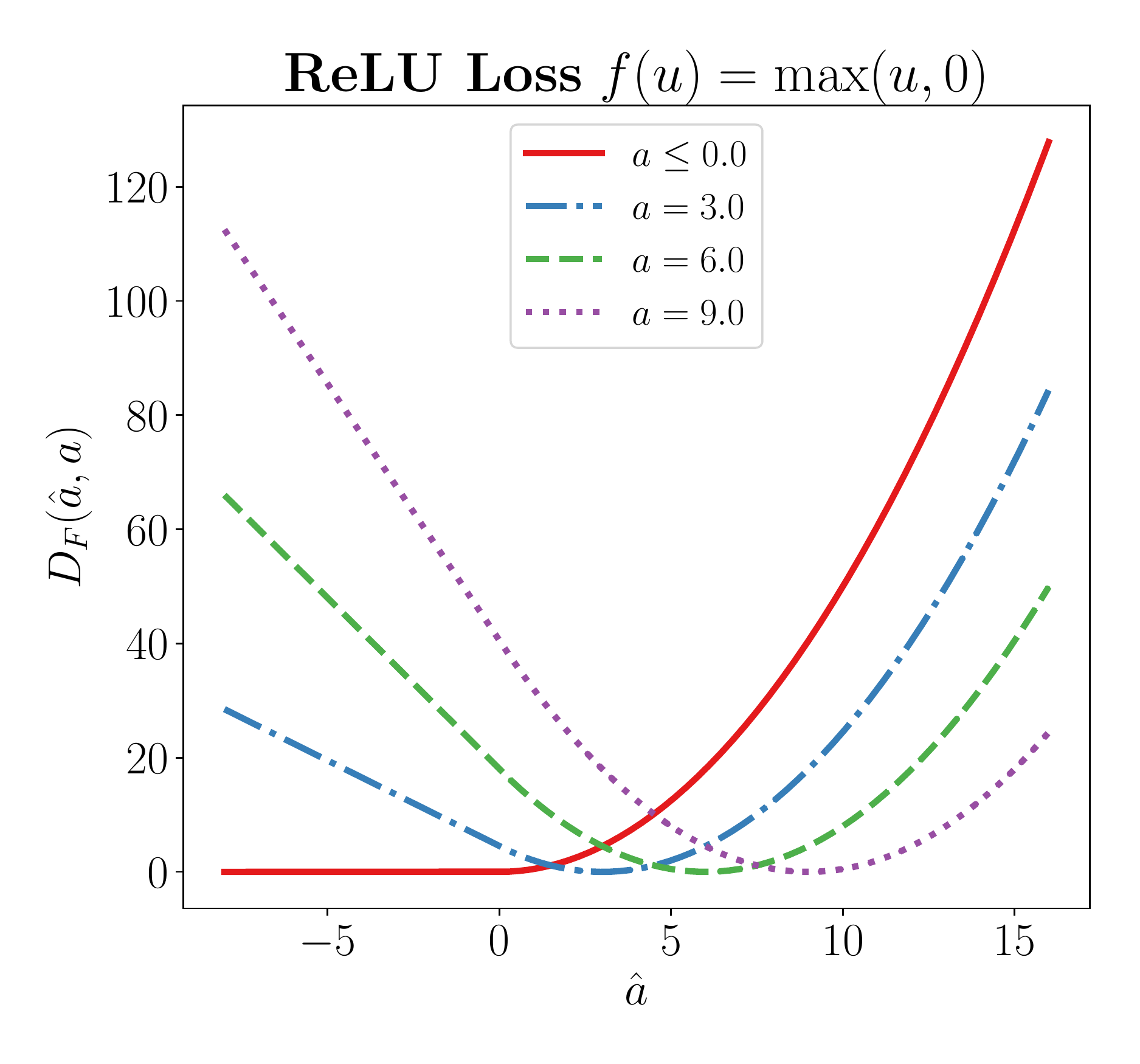}}
    \vspace{-0.3cm}
    \caption{Matching losses induced by the (a) $\tanh$ and (b) ReLU transfer functions. For strictly increasing transfer functions such as $\tanh$, the matching loss is equal to zero iff $\ah = \a$. On the contrary, non-decreasing (but not strictly increasing) transfer functions such as ReLU only satisfy the if condition, thus inducing matching losses with flat regions. (For instance for ReLU, $D_F(\hat{a}, a) = 0$ for all $\hat{a}, a \leq 0$\,.)}
    \label{fig:two_losses}
    \vspace{-0.5cm}
    \end{center}
\end{figure}

\vspace{-0.2cm}
\subsubsection{Common Transfer Functions}
Interestingly, the majority of the commonly used transfer
functions in the modern neural network architectures are
elementwise non-decreasing. Examples of such transfer
functions include (leaky) ReLU, hyperbolic tan, softplus,
softmax (sigmoid), etc. (see the appendix for an extensive list). To simplify the construction, we
refer to the Bregman divergence $D_F(\ah, \a)$ induced by
the convex integral function $F$ as the matching loss of
the transfer function $f$. Although in some cases (such as
softplus) the matching loss has a sophisticated form, in
practice, the only requirement is to calculate the gradient
with respect to the first argument, i.e., $\nabla_{\ah} D_F(\ah,
\a)$. Favorably, Bregman divergences facilitate this
calculation by providing a general form for the derivative
with respect to the first argument; regardless of the form, the
derivative only requires evaluation of the transfer function, as in Eq.~\eqref{eq:delta}.

Lastly, transfer functions such as step function and ReLU
are non-decreasing but not strictly increasing. Thus, the
integral function does not induce a strictly convex
function. As a result, $D_F(\ah, \a) = 0$ does not necessarily imply
$\ah = \a$. Also, the inverses $\a = f^{-1}(\y)$ are not uniquely defined in this case. Figure~\ref{fig:two_losses} illustrates an example. Nonetheless, our construction still applies for
non-decreasing transfer functions as we discuss in the next section. Thus, we will loosely refer  to the divergences induced by non-decreasing transfer functions as Bregman divergences.


\begin{figure*}[t]
\vspace{-0.7cm}
\centering
\scalebox{0.78}{
    \begin{minipage}{0.63\linewidth}
\begin{algorithm}[H]
\caption{LocoProp-S: LocoProp Using Squared Loss}\label{alg:locoprop-s}
\begin{algorithmic}
    \State \textbf{Input} weights $\{\W_m\}$ where $m \in [M]$ for an $M$-layer network, activation step size $\gamma$, weight learning rate $\eta$
\Repeat
    \State $\bullet$\, perform a \textbf{forward pass} and fix the \emph{inputs} $\{\yh_{m-1}\}$
    \State $\bullet$\, perform a \textbf{backward pass} and set the \emph{GD targets}
    \vspace{-0.1cm}\[\a_{m} = \ah_m - \gamma\, \nabla_{\ah_m} L(\y, \yh)\]\vspace{-0.5cm}
    \For{each layer $m \in [M]$ \textbf{in parallel}}
    \vspace{0.05cm}
    \For{$T$ iterations}
    \vspace{-0.3cm}
    \State \[\,\,\,\W_m \gets \W_m - \eta\, \big(\W_m\,\yh_{m-1} - \a_m\big)\,\yh_{m-1}^\top\]
    \EndFor
    \EndFor
    \vspace{-0.18cm}
\Until{\,\,$\{\W_m\}$ not converged}\,\,
\end{algorithmic}
\end{algorithm}
\end{minipage}
\begin{minipage}{0.63\linewidth}
\begin{algorithm}[H]
\caption{LocoProp-M: LocoProp Using Matching Loss}\label{alg:locoprop-m}
\begin{algorithmic}
    \State \textbf{Input} weights $\{\W_m\}$ where $m \in [M]$ for an $M$-layer network, activation step size $\gamma$, weight learning rate $\eta$
\Repeat
    \State $\bullet$\, perform a \textbf{forward pass} fix the \emph{inputs} $\{\yh_{m-1}\}$
    \State $\bullet$\, perform a \textbf{backward pass} and set the \emph{MD targets} \vspace{-0.1cm}\[\y_{m} = \yh_m - \gamma\, \nabla_{\ah_m} L(\y, \yh)\]\vspace{-0.5cm}
    \For{each layer $m \in [M]$ \textbf{in parallel}}
    \vspace{0.05cm}
    \For{$T$ iterations}
    \vspace{-0.3cm}
    \State \[\,\,\W_m \gets \W_m - \eta\, \big(f_m(\W_m\,\yh_{m-1}) - \y_m\big)\,\yh_{m-1}^\top\]
    \EndFor
    \EndFor
    \vspace{-0.18cm}
\Until{\,\,$\{\W_m\}$ not converged}\,\,
\end{algorithmic}
\end{algorithm}
\end{minipage}
}
\vspace{-0.4cm}
\end{figure*}

\subsubsection{Local Matching Loss}
We now discuss another variant of LocoProp using the matching loss of each layer. Given the current post-activation $\yh_m = f_m(\W_m\,\yh_{m-1})$ at layer $m \in [M]$ with a non-decreasing transfer function $f_m$, we define the \emph{Mirror Descent (MD) target}
of the layer as $\y_m = \yh_m - \gamma\, \nabla_{\ah_m} L(\y, \yh)$ where $\gamma > 0$ is again the activation step size. The MD target corresponds to a MD step~\citep{mirror} on the current post-activations using the gradient of the final loss with respect to the pre-activation. Keeping the input to the layer $\yh_{m-1}$ fixed, our local optimization problem at layer $m \in [M]$ now consists of minimizing the matching loss between the new post-activation $f_m(\Wt\yh_{m-1})$ and the MD target, plus a similar squared $\mathrm{L}_2$-norm regularizer,
\begin{equation}
    \label{eq:matching_obj}
    D_{F_m}\!\big(\y_m, f_m(\Wt\yh_{m-1})\big) + \sfrac{1}{2\eta}\, \Vert \Wt - \W_m\Vert^2\, .
\end{equation}
Similarly, $\eta > 0$ controls the trade-off between minimizing the loss and the regularizer. The following proposition shows the convexity of the problem~\eqref{eq:matching_obj} in $\W_m$ in every layer $m \in [M]$ for a fixed input $\yh_{m-\!1}$ and \mbox{MD target $\y_m$.}

\vspace{-0.1cm}
\begin{restatable}{proposition}{propconvex}
\label{prop:convex}
Given fixed input $\yh_{m-1}$ and MD target $\y_m$, the optimization problem~\eqref{eq:matching_obj} is convex in $\W_m$ in every layer $m \in [M]$.
\end{restatable}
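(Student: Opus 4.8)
The plan is to split the objective in Eq.~\eqref{eq:matching_obj} into its two summands and check convexity of each separately in the optimization variable $\Wt$ (which is what ``convex in $\W_m$'' means here). The regularizer $\tfrac{1}{2\eta}\Vert\Wt-\W_m\Vert^2$ is a positive multiple of a squared Euclidean norm of an affine function of $\Wt$, hence convex, so the entire burden falls on the loss term $D_{F_m}\big(\y_m, f_m(\Wt\yh_{m-1})\big)$.

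For the loss term, the key move is to read it through the matching-loss rewrite of Eq.~\eqref{eq:bregman}: expressing $L_{f_m}\big(\y_m, f_m(\Wt\yh_{m-1})\big)$ as a Bregman divergence of the integral function $F_m$ in the \emph{pre-activation} rather than the post-activation, with the target pre-activation $\a_m \doteq f_m^{-1}(\y_m)$ held fixed (it is fixed because $\yh_{m-1}$ and $\y_m$ are),
\[
D_{F_m}\big(\y_m, f_m(\Wt\yh_{m-1})\big) = D_{F_m}\big(\Wt\yh_{m-1},\, \a_m\big).
\]
By Property~(I), $D_{F_m}(\cdot,\a_m)$ is convex in its first argument, i.e.\ in the pre-activation $\ah_m = \Wt\yh_{m-1}$. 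Since the map $\Wt\mapsto\Wt\yh_{m-1}$ is linear, the composition is convex in $\Wt$ (a convex function precomposed with an affine map is convex). Summing this convex function with the convex regularizer gives convexity of the whole objective in $\Wt$, which is the claim. Equivalently, and perhaps more transparently, one may expand the Bregman divergence and observe that the only genuinely $\Wt$-dependent part of Eq.~\eqref{eq:matching_obj} is $F_m(\Wt\yh_{m-1}) - \y_m^{\top}\Wt\yh_{m-1} + \tfrac{1}{2\eta}\Vert\Wt-\W_m\Vert^2$, which is ``convex $+$ linear $+$ convex''.

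The one point I would spell out with care is the case of transfer functions that are non-decreasing but not strictly increasing (ReLU, step, etc.), where $F_m$ is convex but not strictly convex and $f_m^{-1}$ is not single-valued (so $\a_m$ need not be well defined). This causes no real trouble: only convexity --- not strict convexity --- is being claimed, and the $\Wt$-dependent expression $F_m(\Wt\yh_{m-1}) - \y_m^{\top}\Wt\yh_{m-1}$ is well defined and convex regardless of whether a clean preimage $\a_m$ exists, with $\y_m$ playing the role of the (sub)gradient $f_m(\a_m)$. The rest is the routine ``convex $\circ$ affine $+$ convex'' bookkeeping, and I expect no genuine obstacle beyond this notational reconciliation.
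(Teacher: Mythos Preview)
Your proposal is correct and follows essentially the same line as the paper's own proof: rewrite the matching loss as $D_{F_m}(\Wt\yh_{m-1},\a_m)$, invoke convexity of a Bregman divergence in its first argument (Property~(I)), and note that $\Wt\mapsto\Wt\yh_{m-1}$ is linear. The paper's proof is a two-line version of exactly this; your additional handling of the regularizer and of the non-strictly-increasing case are welcome elaborations that the paper omits but implicitly assumes.
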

\vspace{-0.25cm}
Setting the derivative of the objective to zero, we can write $\W_m^{\new}$ as the solution of a fixed point iteration,
\begin{equation}
\label{eq:matching_fixed}
    \W^{\new}_m = \W_m - \eta\, \big(f_m(\W^{\new}_m\yh_{m-1}) - \y_m\big)\,\yh_{m-1}^\top\, .
\end{equation}
Notably, calculating the gradient in Eq.~\eqref{eq:matching_fixed} only requires the value of the input to the layer along with the difference between the post-activations. Unlike LocoProp-S, the fixed-point iteration of LocoProp-M in Eq.~\eqref{eq:matching_fixed} does not yield a closed-form solution in general. However, the following proposition provides the approximate preconditioned update form.
\begin{restatable}{proposition}{locompreform}
\label{prop:locom-pre-form}
The LocoProp-M update in Eq.~\eqref{eq:matching_fixed} can approximately be written in the vectorized form $\w_m = \vect \W_m$ as $\w_m^{\new} = \w_m + \bm{\delta}_m$,
\[
\bm{\delta}_m = -\eta_e\,\bm{C}_m^{-1} (\I \otimes \H_{F_m}^{-1}) \vect(\nabla_{\W_m}L(\y, \yh))\, ,
\]
where $\H_{F_m} = \nabla^2 F_m$ is the Hessian matrix and
\[
\bm{C}_m = \I \otimes \H_{F_m}^{-1} + \eta\, \yh_{m-1}\yh_{m-1}^\top \otimes \I\, ,
\]
is the preconditioner matrix.
\end{restatable}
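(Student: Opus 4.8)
The plan is to linearize the fixed-point relation~\eqref{eq:matching_fixed} around the current iterate and then read off the preconditioner by elementary Kronecker algebra. Write $\DD_m \doteq \W_m^{\new} - \W_m$, so that in vectorized form $\bm{\delta}_m = \vect(\DD_m)$ is exactly the quantity in the statement. Since $\W_m^{\new}\yh_{m-1} = \ah_m + \DD_m\yh_{m-1}$ and $f_m = \nabla F_m$, a first-order Taylor expansion gives $f_m(\W_m^{\new}\yh_{m-1}) \approx \yh_m + \H_{F_m}\,\DD_m\yh_{m-1}$, where $\H_{F_m} = \nabla^2 F_m$ is evaluated at $\ah_m$ and treated as constant; this is the only approximation used, and it is the step I would spell out carefully (dropping the $O(\Vert\DD_m\yh_{m-1}\Vert^2)$ remainder and freezing the Hessian). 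Substituting into~\eqref{eq:matching_fixed} and using the MD target definition to replace $\yh_m - \y_m$ by $\gamma\,\nabla_{\ah_m}L(\y,\yh)$ turns~\eqref{eq:matching_fixed} into a \emph{linear} equation in $\DD_m$, which also resolves the implicit appearance of $\W_m^{\new}$ on both sides, since a linear fixed-point equation can be solved exactly.

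Next I would move the $\DD_m$-dependent term to the left and use the chain-rule identity $\nabla_{\W_m}L(\y,\yh) = \nabla_{\ah_m}L(\y,\yh)\,\yh_{m-1}^\top$ (the same identity used in the BackProp derivation above) to recognize the right-hand side as $-\eta_e\,\nabla_{\W_m}L(\y,\yh)$ with $\eta_e = \eta\gamma$. Vectorizing with the identity $\vect(AXB) = (B^\top\otimes A)\vect(X)$ and using the symmetry of $\yh_{m-1}\yh_{m-1}^\top$ yields
\[
\big(\I + \eta\,\yh_{m-1}\yh_{m-1}^\top\otimes\H_{F_m}\big)\,\bm{\delta}_m = -\eta_e\,\vect\big(\nabla_{\W_m}L(\y,\yh)\big)\, .
\]

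It then remains to put $\big(\I + \eta\,\yh_{m-1}\yh_{m-1}^\top\otimes\H_{F_m}\big)^{-1}$ in the claimed factored form. Using the mixed-product rule $(A\otimes B)(C\otimes D)=(AC)\otimes(BD)$ one checks directly that $(\I\otimes\H_{F_m}^{-1})\big(\I + \eta\,\yh_{m-1}\yh_{m-1}^\top\otimes\H_{F_m}\big) = \I\otimes\H_{F_m}^{-1} + \eta\,\yh_{m-1}\yh_{m-1}^\top\otimes\I = \bm{C}_m$, so $\big(\I + \eta\,\yh_{m-1}\yh_{m-1}^\top\otimes\H_{F_m}\big)^{-1} = \bm{C}_m^{-1}(\I\otimes\H_{F_m}^{-1})$. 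Combining with the displayed linear equation gives the stated expression $\bm{\delta}_m = -\eta_e\,\bm{C}_m^{-1}(\I\otimes\H_{F_m}^{-1})\vect(\nabla_{\W_m}L(\y,\yh))$.

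The main obstacle is not any single computation but the bookkeeping: being precise about what ``approximately'' means (the single linearization of $f_m$, with the Hessian frozen at the current pre-activation, after which the system is linear and exactly solvable), and keeping the transposes straight when vectorizing $AXB$ — harmless here only because $\yh_{m-1}\yh_{m-1}^\top$ is symmetric. The Kronecker identity in the last step is the one place where an order slip is easy, so I would double-check that $(A\otimes B)(C\otimes D)=(AC)\otimes(BD)$ is applied in the correct order.
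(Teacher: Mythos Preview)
Your proposal is correct and follows essentially the same approach as the paper: linearize the fixed-point relation~\eqref{eq:matching_fixed} around the current iterate, collect the $\DD_m$-terms into a linear (Sylvester-type) equation, and vectorize. The only cosmetic difference is that the paper premultiplies the matrix equation by $\H_{F_m}^{-1}$ \emph{before} vectorizing, so $\bm{C}_m$ appears directly on the left without the extra Kronecker factorization step you perform; both routes yield the same expression.
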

Interestingly, similar to LocoProp-S, the first iteration of the procedure always recovers BackProp,
\begin{align*}
\W^{\new}_m \approx &\, \W_m - \eta\, \big(f_m(\W_m\yh_{m-1}) - \y_m\big)\,\yh_{m-1}^\top\\
= &  \,\W_m - \eta_e\, \nabla_{\ah_m}L(\y, \yh)\,\yh_{m-1}^\top\, . \tag{BackProp}
\end{align*}
Thus, any further iterations enhances the initial BackProp update towards the fixed-point solution of Eq.~\eqref{eq:matching_fixed}.
Also interestingly for a network which already utilizes a matching loss in the last layer, the objective in Eq.~\eqref{eq:matching_obj} with MD target and $\gamma = 1$ corresponds to directly minimizing the final loss.
\begin{restatable}{proposition}{propfinal}
\label{prop:final}
For networks with a matching loss  $L(\y, \yh) =  L_{f_M}\!(\y, \yh)$ in the last layer $M$, the LocoProp-M objective in Eq.~\eqref{eq:matching_obj} with $\gamma=1$ corresponds to the the loss of the network plus the regularizer term on the weights.
\end{restatable}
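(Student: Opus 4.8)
The plan is to show that at the last layer $m=M$, choosing $\gamma=1$ forces the Mirror Descent target to coincide exactly with the true label $\y$, after which the matching-loss term of Eq.~\eqref{eq:matching_obj} becomes literally the network's final loss viewed as a function of the last-layer weights.

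First I would evaluate the MD target at layer $M$. By definition $\y_M = \yh_M - \gamma\,\nabla_{\ah_M}L(\y,\yh)$, and since the hypothesis of the proposition is precisely that the final loss is the matching loss of $f_M$, i.e.\ $L(\y,\yh) = L_{f_M}\!\big(\y,\yh(\ah_M)\big)$, the identity in Eq.~\eqref{eq:delta} applied to $f_M$ gives $\nabla_{\ah_M}L(\y,\yh) = f_M(\ah_M) - f_M(\a) = \yh_M - \y$, where $\a = f_M^{-1}(\y)$. Plugging in $\gamma=1$ then yields $\y_M = \yh_M - (\yh_M - \y) = \y$. This substitution is the crux of the argument.

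Next I would put $\y_M=\y$ back into the LocoProp-M objective~\eqref{eq:matching_obj} for $m=M$, obtaining $D_{F_M}\!\big(\y, f_M(\Wt\yh_{M-1})\big) + \tfrac{1}{2\eta}\Vert\Wt - \W_M\Vert^2$, and then identify the first term with the network loss. Because only the last-layer weights are varied while the input $\yh_{M-1}$ to that layer is held fixed, the vector $f_M(\Wt\yh_{M-1})$ is exactly the network output produced with last-layer weights $\Wt$. Using the rewrite~\eqref{eq:bregman} of a matching loss as a Bregman divergence (equivalently Property~(II)) with pre-activation $\ah = \Wt\yh_{M-1}$ and target pre-activation $\a=f_M^{-1}(\y)$, we get $D_{F_M}\!\big(\y, f_M(\Wt\yh_{M-1})\big) = L_{f_M}\!\big(\y, f_M(\Wt\yh_{M-1})\big) = L\big(\y, f_M(\Wt\yh_{M-1})\big)$, which is exactly the loss of the network evaluated with last-layer weights $\Wt$. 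Hence the LocoProp-M objective for the last layer equals the network loss plus the weight regularizer $\tfrac{1}{2\eta}\Vert\Wt - \W_M\Vert^2$, which is the claim.

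I do not expect a genuine obstacle here: the argument is essentially a single substitution enabled by the special gradient form~\eqref{eq:delta}. The only delicate point is the degenerate case in which $\y$ is one-hot and $f_M$ is the softmax, so that $\a=f_M^{-1}(\y)$ is unbounded; but $D_{F_M}$ and the matching loss stay well defined in the appropriate limiting sense (as already noted in the text surrounding Eq.~\eqref{eq:kl}), so the identification still goes through. For consistency with the discussion just before the statement, I would also note that truncating this objective to a single fixed-point iteration reproduces exactly one BackProp step on the final loss.
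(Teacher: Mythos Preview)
Your proposal is correct and follows essentially the same approach as the paper's own proof: compute the MD target at layer $M$, invoke the matching-loss gradient identity~\eqref{eq:delta} to obtain $\y_M=\y$ when $\gamma=1$, and then read off that the first term of~\eqref{eq:matching_obj} is the final loss. Your write-up is in fact more detailed than the paper's (which is a two-line argument), but the logical skeleton is identical.
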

\vspace{-0.3cm}

Proposition~\ref{prop:final} shows why layerwise matching loss is a more natural choice for defining the local problems, as the last layer simply minimizes the final loss with respect to to the weights of the last layer. The LocoProp-S and LocoProp-M variants are given in Algorithm~\ref{alg:locoprop-s} and Algorithm~\ref{alg:locoprop-m}, respectively.

\section{EXPERIMENTS}

We perform an extensive study on optimizing a deep auto-encoder on three standard datasets: MNIST~\citep{lecun-mnisthandwrittendigit-2010}, Fashion MNIST~\citep{xiao2017}, and CURVES\footnote{Downloadable at \url{www.cs.toronto.edu/~jmartens/digs3pts_1.mat}.}. Our emphasis on the deep auto-encoder task is due to it being a standard benchmark when studying new second-order methods for optimization~\citep{martens2010deep, goldfarb2020practical}. Due to the limited size (only a few million parameters), it allows answering a plethora of questions with rigor on the effectiveness of our local loss construction. With our extensive tuning for the experiments, we find that adaptive first-order methods work far better than what is reported in the existing literature~\citep{goldfarb2020practical}.

For all experiments, the batch size is set to 1000 and the model is trained for 100 epochs with a learning rate schedule that includes a linear warmup for 5 epochs followed by a linear decay towards zero. The local iterations in LocoProp variants apply an additional linear decay schedule on the given learning rate. There are three configurations of autoencoders used in training: (a) Standard sized (2.72M parameters) with layer sizes: [1000, 500, 250, 30, 250, 500, 1000] (b) Deep (6.72M parameters)  [1000, 500 $\times$ 8 times, 250, 30, 250, 500 $\times$ 8 times, 1000] and (c) Wide (26M parameters)  [4000, 2000, 1000, 120, 1000, 2000, 4000]. We conduct several ablation studies on choices such as (a) transfer functions (ReLU, $\tanh$), (b) model sizes, (c) datasets, (d) number of local iterations, and (e) inner optimizer for local iterations. Our implementation is in TensorFlow~\citep{tensorflow} and all walltime measurements are made on a V100 GPU. For the second-order method implementation, K-FAC computes statistics based on the sampled gradients from the model distribution \citep{kunstner2019limitations}. For tuning, we use a Bayesian optimization package for each of our experiments. We search for hyper-parameters ($\eta, \beta_1, \beta_2, \epsilon$) for over hundreds (for larger scale models) to several thousand trials (for standard sized models). The search space, list of hyper-parameters, and further ablations across transfer functions, regularization, and batch sizes are available in the supplementary material, and the code to reproduce the experiments is available at \url{https://github.com/google-research/google-research/tree/master/locoprop}. We focus on our results on the standard, deep, and wide autoencoder variants with $\tanh$ transfer function on the MNIST dataset and relegate the additional results to the appendix.

\begin{figure*}[t!]
\vspace{-0.2cm}
\begin{center}
    \subfigure[RMSProp performs the best (for BackProp).]{\includegraphics[height=0.3\linewidth]{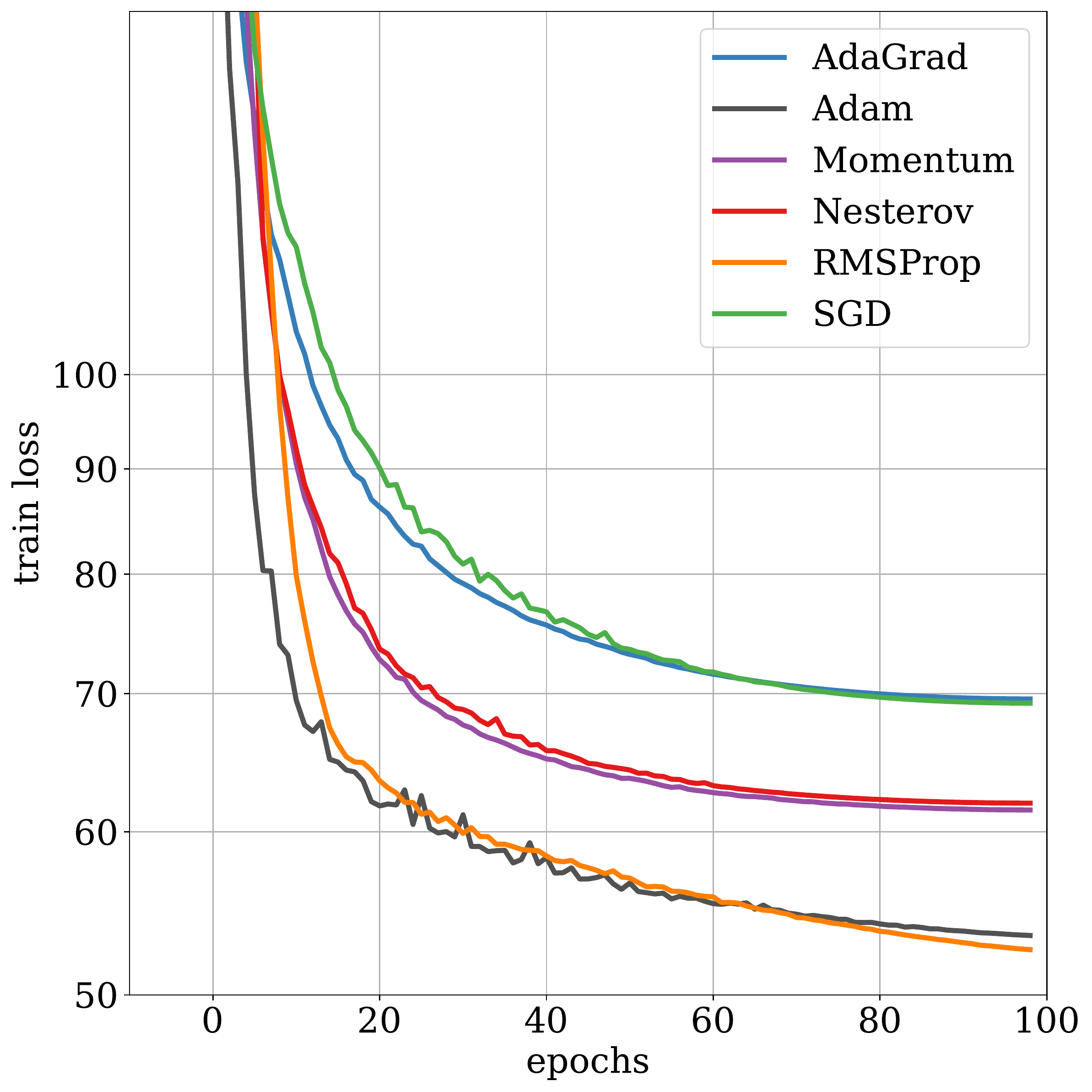}}  
    \hspace{0.1cm}    
    \subfigure[LocoProp `M' variant performs better than `S', at 10 local \mbox{iterations}]{\includegraphics[height=0.3\linewidth]{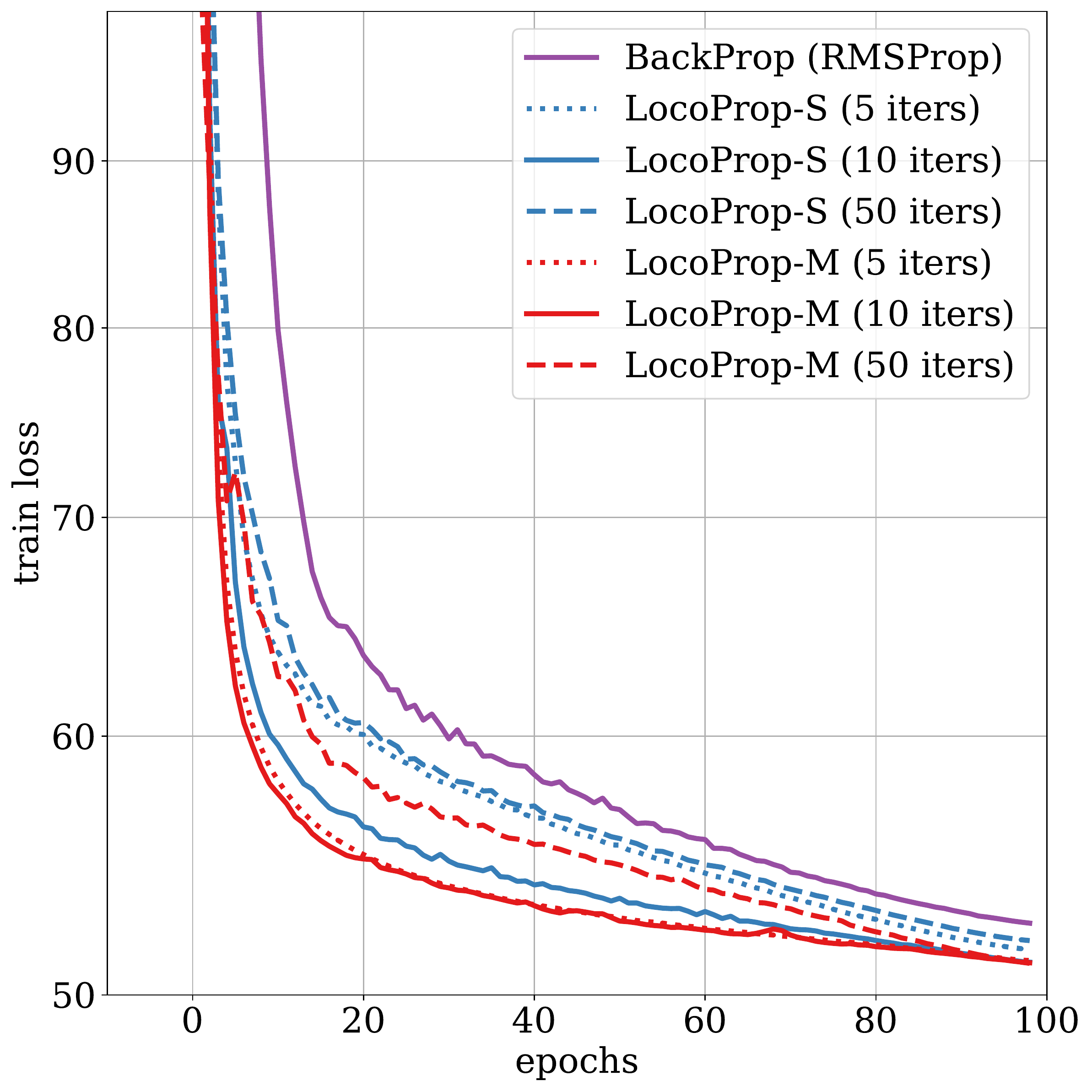}}
    \hspace{0.1cm} 
    \subfigure[RMSProp works best for local \mbox{iterations} with LocoProp-M]{\includegraphics[height=0.3\linewidth]{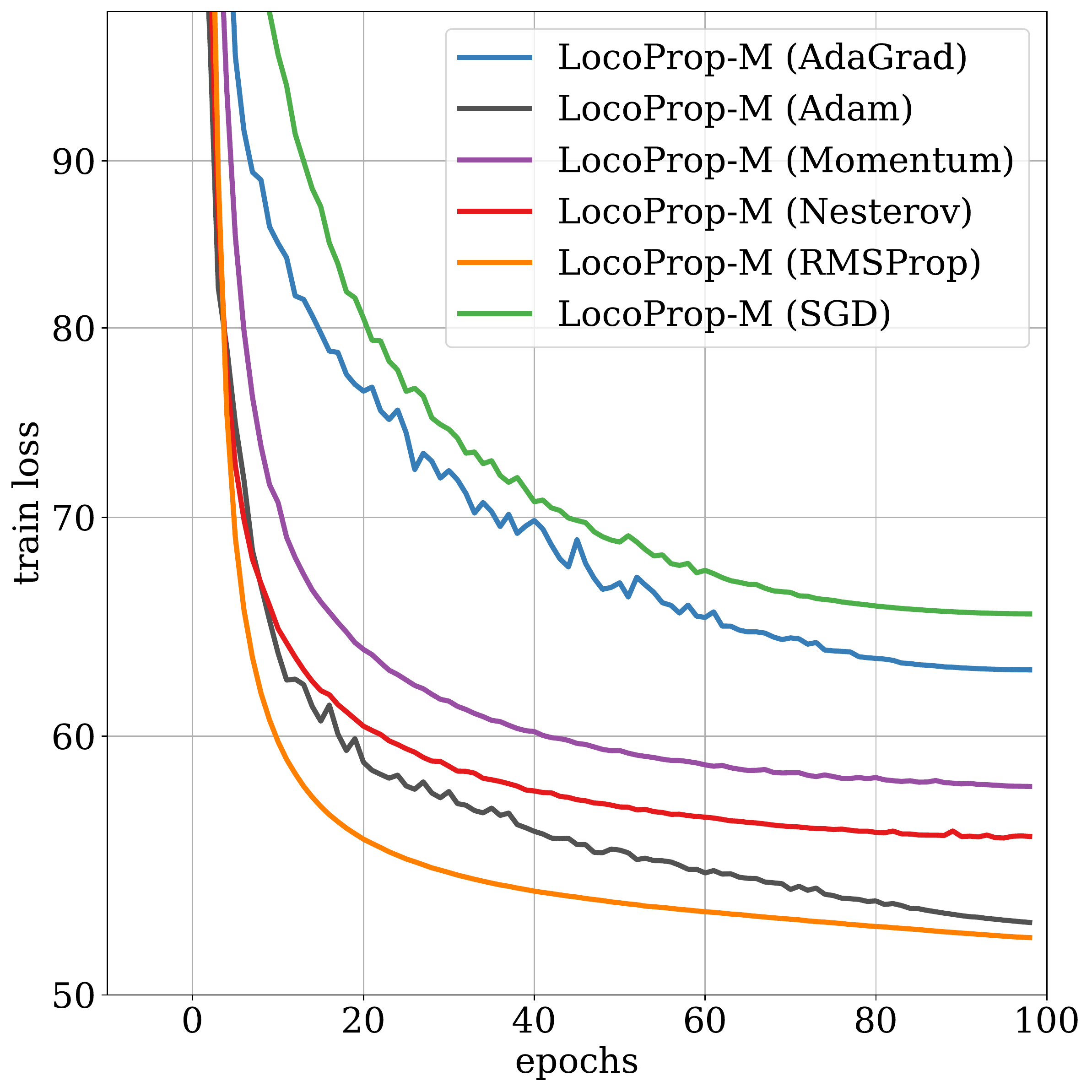}}

    \vspace{-0.25cm}
    \caption{Results on the MNIST dataset with $\tanh$ \mbox{transfer} function. Comparisons on the standard autoencoder: (a) first-order methods, (b)  LocoProp-S vs. LocoProp-M, (c) effect of inner optimizer on LocoProp-M.
    }
    \label{fig:mnist_tanh}
    \end{center}
    \vspace{-0.1cm}
\end{figure*}

\begin{table*}[t!]
\vspace{-0.2cm}
\caption{Additional computational and memory cost for a layer with dimensions $\{d_i\}$ and batch size $b$. The number of local iterations for LocoProp is denoted by $T$.}
\vspace{-0.4cm}
\begin{center}
\scalebox{0.85}{
\begin{tabular}{ c|c|c } 
 \toprule
 \textbf{Algorithm} & \textbf{Memory} &  \textbf{Computation} \\ 
 \midrule
 Adam & $\mathcal{O}\big( \prod_i d_i \big)$ & $ \mathcal{O}\big( \prod_i d_i\big)$  \\[1mm] 
  \hline
 Shampoo & $\mathcal{O}\big(\sum_i d_i^2\big)$ & $ \mathcal{O}\big(\sum_i  d_i^3\big)$  \\ [1mm]
 K-FAC & $\mathcal{O}\big(\sum_i d_i^2\big) $ & $\mathcal{O}\big(b  \sum_i d_i^2\big)  +  \mathcal{O}\big(\sum_i  d_i^3\big) + \mathcal{O}\big(b \prod_i d_i\big)$  \\ [1mm]
  \hline
 LocoProp & $\mathcal{O}\big( \prod_i d_i\big) + \mathcal{O}\big(b\sum_i d_i\big)$ & $ \mathcal{O}\big(b T \prod_i d_i\big)$  \\
 \bottomrule
\end{tabular}
}
\label{tbl:complexity}
\end{center}
\vspace{-0.5cm}
\end{table*}

\subsection{Tuned Results for First-order Methods}
We tune baseline first-order optimizers with a Bayesian optimization package for thousands of trials. We use standard implementations of optimizers in TensorFlow and tune all relevant hyper-parameters. 
The search space and further information are given in the  supplementary material. Firstly, we observe that the \mbox{RMSProp} optimizer works remarkably well, as seen in Figure~\ref{fig:mnist_tanh}(a). A difference to note is that RMSProp in TensorFlow includes an option for enabling momentum. We utilize this option, which is very similar to Adam: The former uses heavy ball momentum whereas the latter uses exponential moving averages. We discover that the performance gap between first-order and second-order methods, while still significant, is smaller than what has been previously reported in the literature. We attribute this to our exhaustive hyper-parameter tuning as well as the fact that we made use of of best practices for training neural networks such as learning rate warmup and decay.

\begin{figure*}[t!]
\vspace{-0.2cm}
\begin{center}
    \subfigure[Standard autoencoder]{\includegraphics[height=0.3\linewidth]{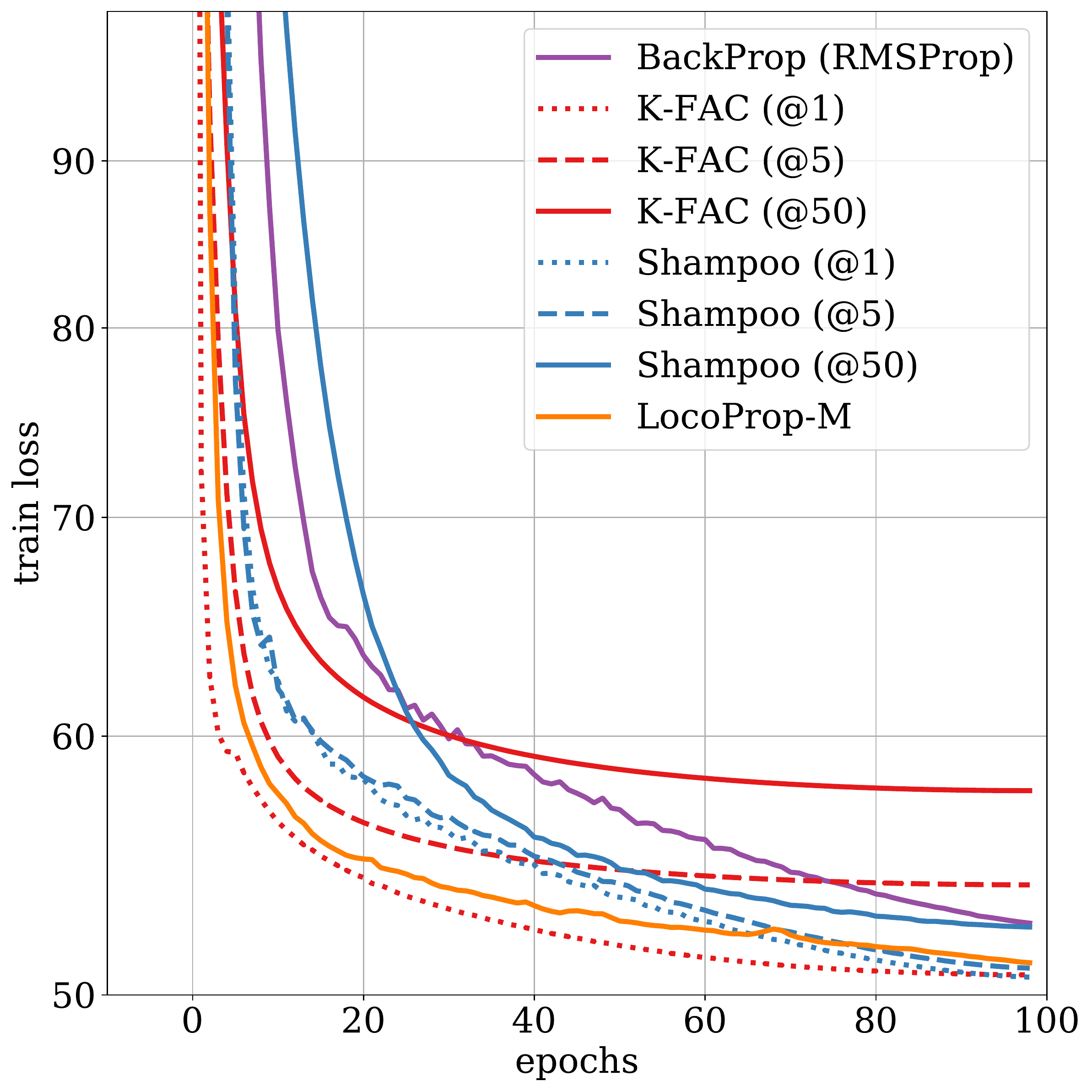}}  
    \hspace{0.1cm}    
    \subfigure[Wide autoencoder]{\includegraphics[height=0.3\linewidth]{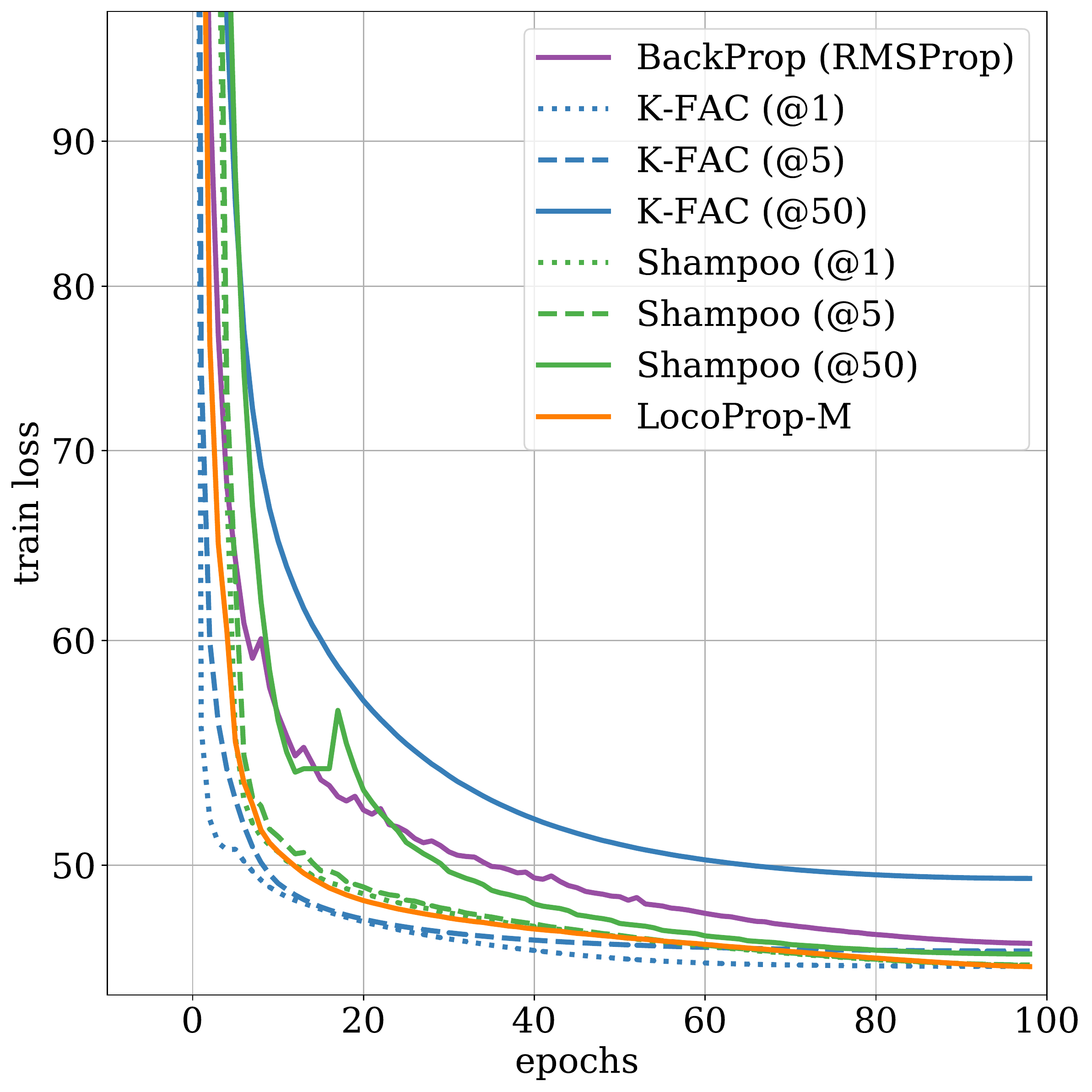}}  
    \hspace{0.1cm}    
    \subfigure[Deep autoencoder]{\includegraphics[height=0.3\linewidth]{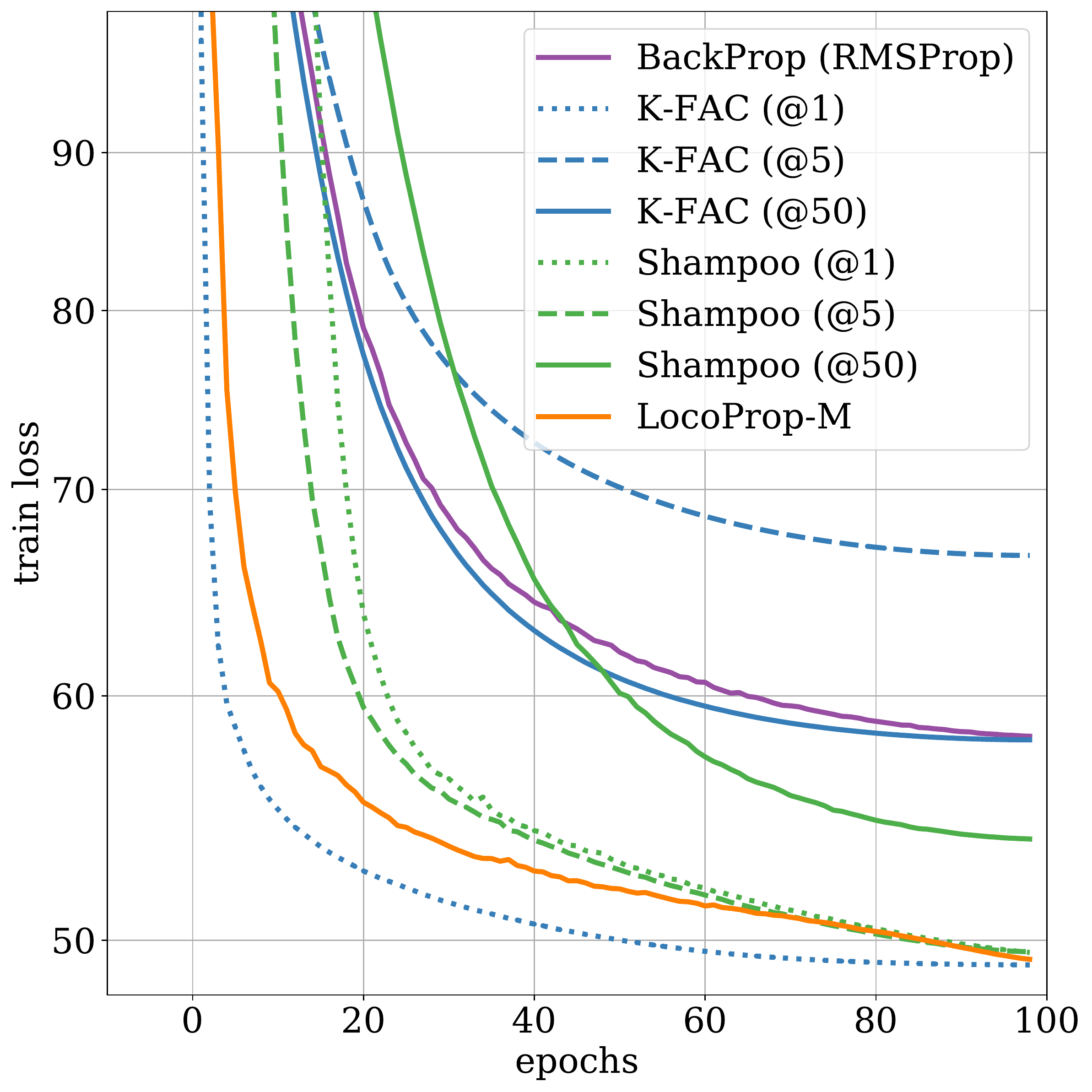}}
    \vspace{-0.2cm}
    \caption{Comparisons for standard, wide, and deep autoencoders on MNIST with $\tanh$ \mbox{transfer} function. All algorithms are trained for 100 epochs. @k indicates the interval for carrying out inverse ($p$th root) operation.}
    \label{fig:epoch_speedup}
    \vspace{-0.2cm}
    \end{center}
\end{figure*}

\begin{figure*}[]
\vspace{-0.3cm}
\begin{center}
    \subfigure[Standard autoencoder]{\includegraphics[height=0.3\linewidth]{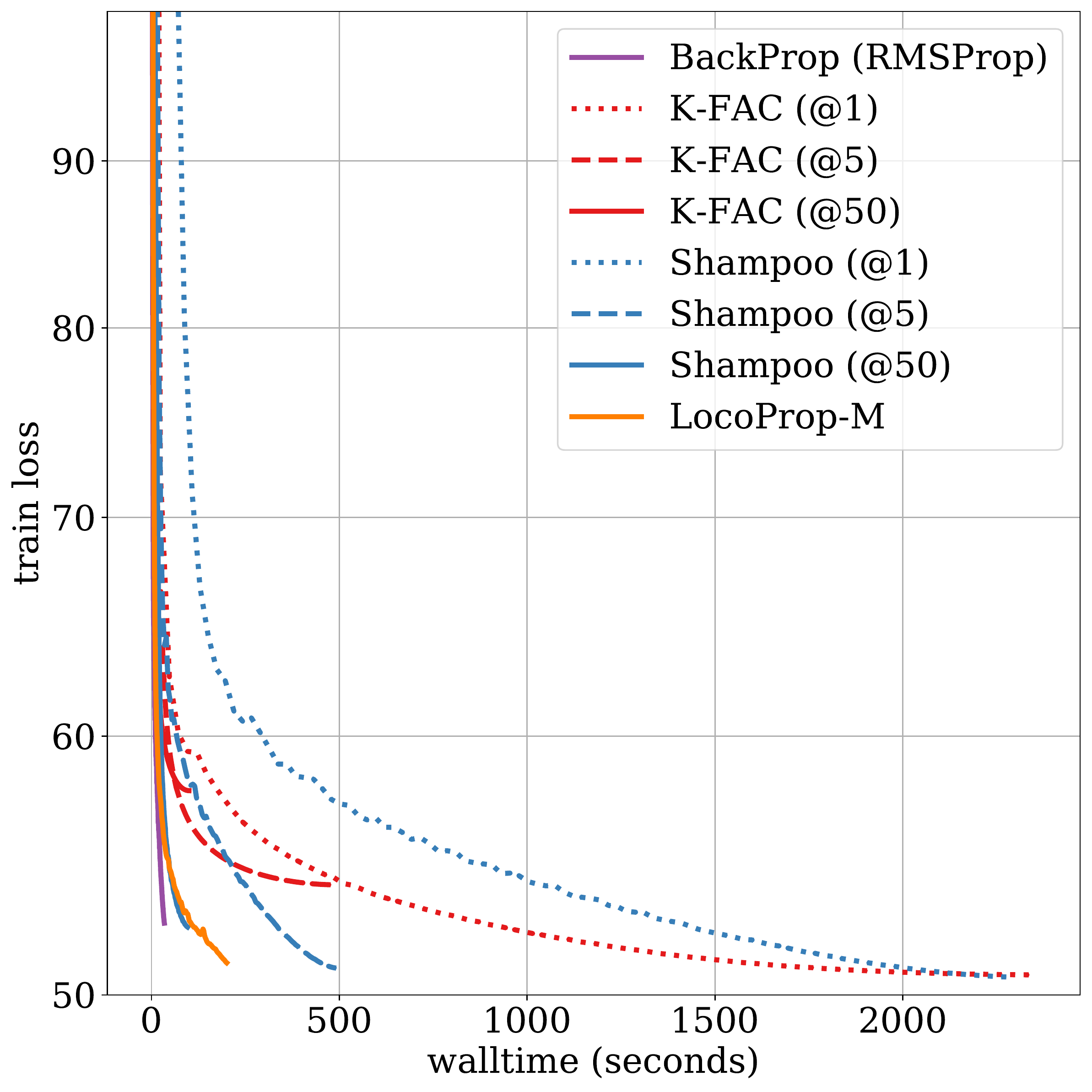}}  
    \hspace{0.1cm}    
    \subfigure[Wide autoencoder]{\includegraphics[height=0.3\linewidth]{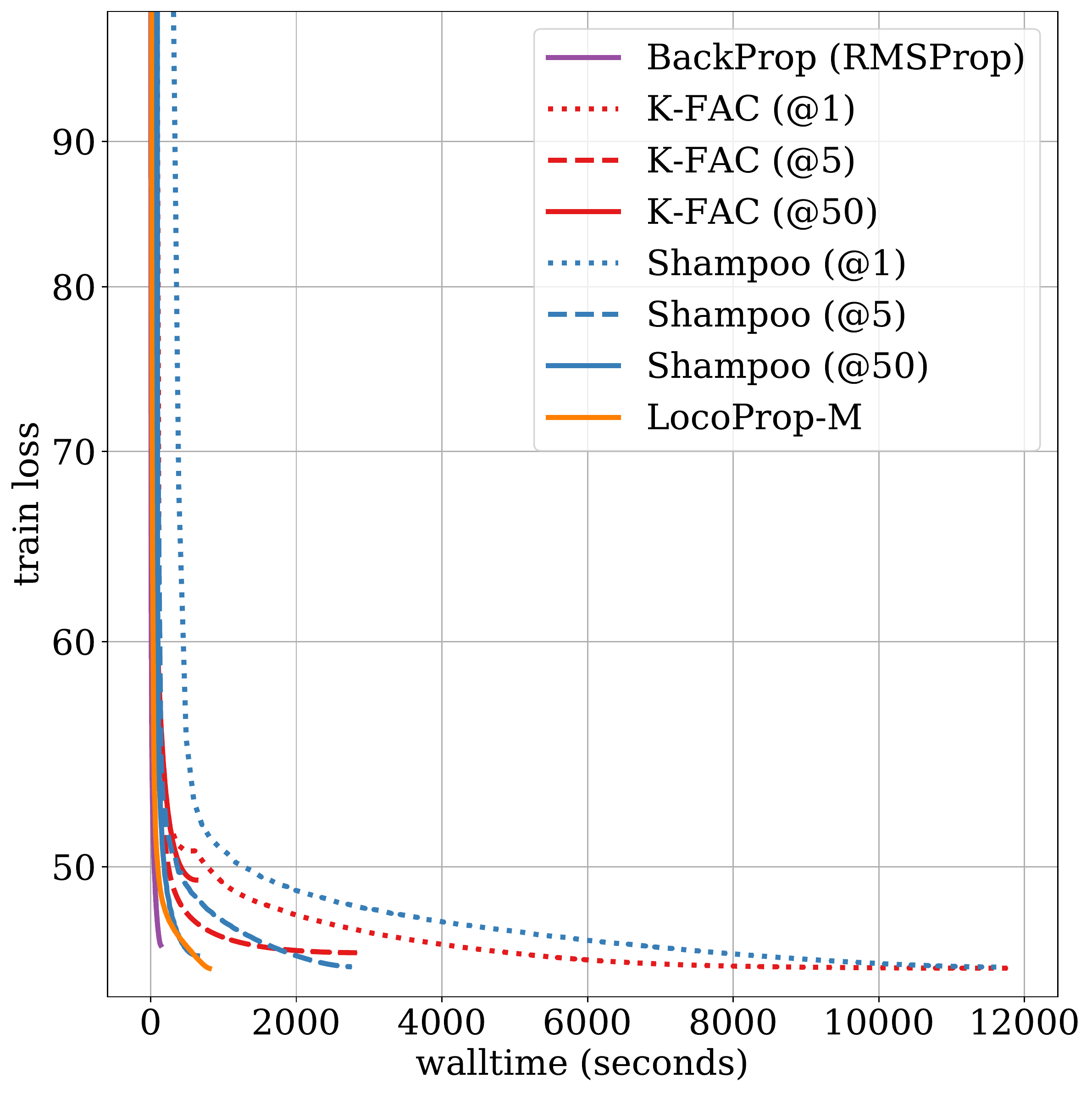}}  
    \hspace{0.1cm}    
    \subfigure[Deep autoencoder]{\includegraphics[height=0.3\linewidth]{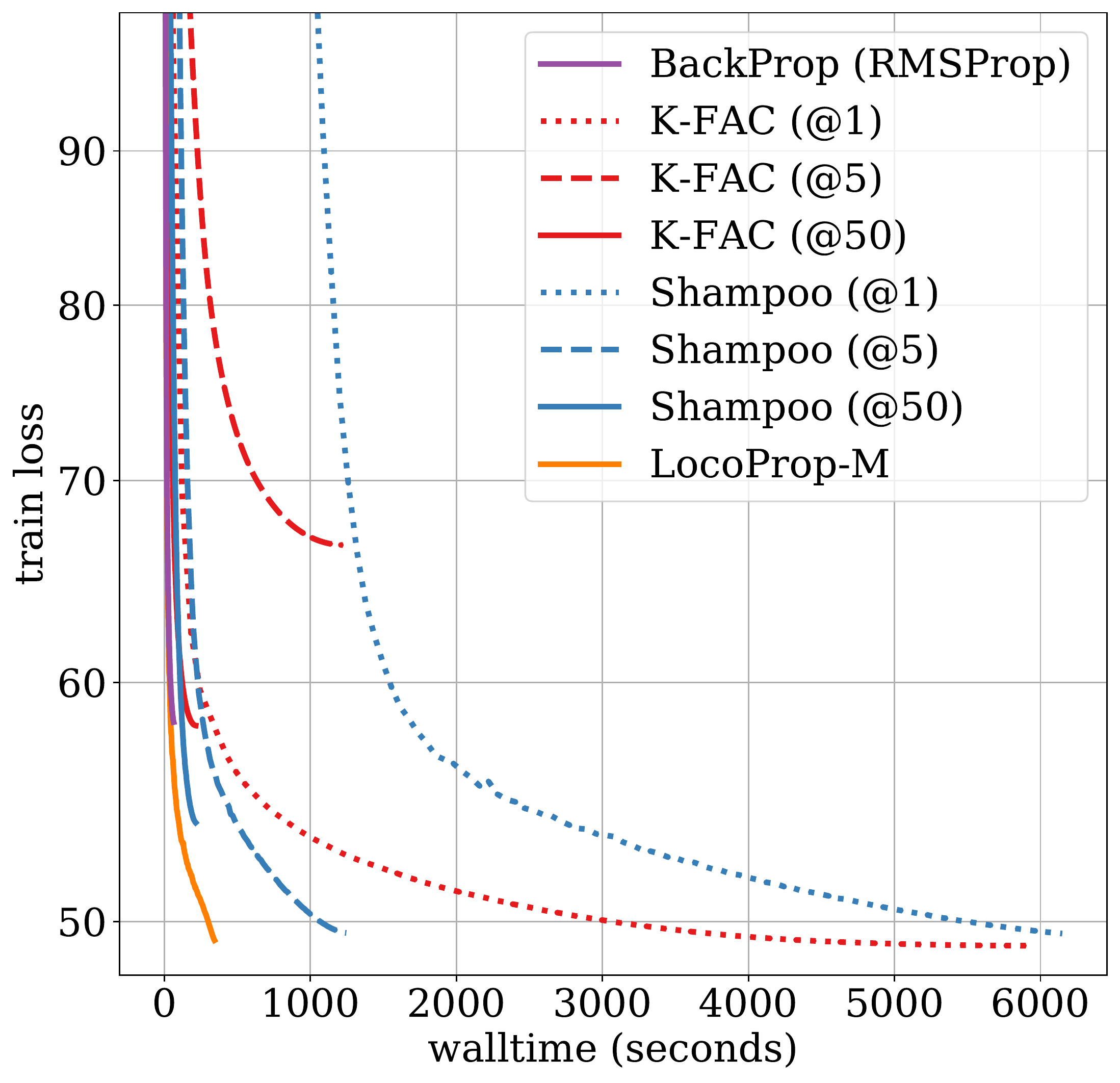}}
    \vspace{-0.2cm}
    \caption{Walltime comparisons for standard, wide, and deep autoencoders on MNIST with $\tanh$ \mbox{transfer} function. LocoProp is much faster than optimized second-order methods on a single V100 GPU (limited parallelism). @k indicates the interval for carrying out inverse ($p$th root) operation.}
    \label{fig:speedup}
    \vspace{-0.4cm}
    \end{center}
\end{figure*}


\vspace{-0.1cm}
\subsection{Comparison between LocoProp-S and LocoProp-M Variants}
We compare the performance of (a) LocoProp-S (using GD targets) and (b) LocoProp-M (using MD targets) on the standard sized deep encoder model on the MNIST dataset. We notice that the matching loss variant consistently performs better as seen in Figure~\ref{fig:mnist_tanh}(b). A rationale is that the matching losses are more adaptive to the local non-linear transfer functions ($\tanh$, in this case) than squared loss. 

\subsection{Number of Local Iterations and Choice of Optimizer for LocoProp-M}
The number of local gradient descent steps is a hyper-parameter for LocoProp. As described earlier, a single local iteration is equivalent to BackProp. Intuitively, further iterations are expected to improve on minimizing the local losses and make the update closer to a preconditioned gradient form. Thus, we expect to see more improvements as we increase the number of local iterations. However, additional improvements should diminish with the number of iterations. This is aligned with our observation in Figure~\ref{fig:mnist_tanh}(b), where increasing the number of local iterations from $5$ to $10$ and $50$ consistently improves the performance. However, the performance gain after $10$ local iterations is marginal, and thus, for all further experiments, we fix the number of local iterations to $10$. Consequently, we refer to LocoProp-M with $10$ local iterations as LocoProp. The LocoProp iterations minimize the local losses for each layer by gradient descent. We ablate across various choices of optimizers in Figure~\ref{fig:mnist_tanh}(c). Again, we find that RMSProp works quite well.

\subsection{Computational and Memory Complexity}
Here we provide a detailed account of the additional computational and memory complexity of LocoProp along with standard first-order and second-order methods in Table~\ref{tbl:complexity}. For notations, we use $b$ for batch size, and $\{d_i\}$ denotes the dimension of the weights of the layer, for example, $\{d_1, d_2\}$ for a fully connected network, and $T$ denotes the number of local iterations. K-FAC has an additional computational complexity compared to Shampoo due to calculating sampled gradients for the Fisher approximation. LocoProp has an advantage over KFAC and Shampoo both in terms of memory and computational complexity when batch size is comparable or smaller than the dimensions of the layers. Indeed we observe this in our experiments in Section~\ref{sec:comparison}.

\subsection{Comparison to Second-order Methods}
\label{sec:comparison}
We perform an end to end wall time comparison against second-order methods. LocoProp has similar convergence behavior to second-order methods while having significantly less overhead both in terms of memory and computation time. Second-order methods can be made faster by reducing the amount of computation by running the most expensive part of the step (i.e., matrix inverse or root) only every $k$ steps. Note this comes with degradation in overall quality which we notice especially for K-FAC. The effectiveness of delayed inverse calculation depends on task specific parameters such as batch size and number of examples. For larger batch sizes, every step makes large progress and the curvature estimates need to be updated more frequently. Several investigations along these lines have studied the effect of stale preconditioners~\citep{anilpractical, osawa18, ba2016distributed}. We find that LocoProp has a significant wall-time advantage compared to second-order methods. Second-order methods barely match the walltime performance of LocoProp by running the inverse $p$-th roots every 50 steps for Shampoo whereas for K-FAC delayed preconditioning configuration degrades the solution quality as seen in Figure~\ref{fig:speedup} and the corresponding steps to convergence in Figure~\ref{fig:epoch_speedup}.

\section{Conclusions} 
We presented a simple yet effective enhancement to BackProp via a local loss construction that reduces the gap between first-order and second-order optimizers. We perform a critical study of its effectiveness across a wide range of model sizes and dataset choices. The construction is embarrassingly parallel, allowing it to have a wallclock time advantage. Finally, we also have shown that our variant with squared (matching) loss is (approximately) an iterative version of a preconditioned update. Future work involves scaling the method up to much larger architectures across tasks.

{\bf Broader Impact and Limitations.} This paper introduces a technique that could be used to accelerate the training of neural networks. This could have a positive downstream implication on reducing the energy usage of training large models. LocoProp is a new technique; it still remains to be seen how well the method generally works across tasks. 

\section*{Acknowledgments}
We would like thank Frank Nielsen and our anonymous reviewers for their valuable feedback.

\bibliography{refs}

\newpage
\appendix
\onecolumn \makesupplementtitle
\section{MATCHING LOSSES OF COMMON TRANSFER FUNCTIONS}
\label{app:transfer}
We provide a more extensive list of convex integral functions of common transfer functions in Table~\ref{tab:transfer-ext}.
 \renewcommand{\arraystretch}{1.5}
 \begin{table*}[h!]
\caption{Example of elementwise non-decreasing transfer functions and their corresponding convex integral function. The Bregman divergence can be formed by plugging in the convex integral function into the definition: $D_F(\ah, \a) = F(\ah) - F(\a) - f(\a)^\top (\ah - \a)$. For instance, for ``linear'' activation, we have $F_{\text{\tiny Sq}}(\a) =  \sfrac{1}{2}\,\Vert\a\Vert^2$ and the Bregman divergence is $D_{F_{\text{\tiny Sq}}}(\ah, \a) = \sfrac{1}{2}\,\Vert\ah\Vert^2 - \sfrac{1}{2}\,\Vert\a\Vert^2 - \a^\top (\ah - \a) = \sfrac{1}{2}\,\Vert\ah - \a\Vert^2$. 
For non-decreasing but not strictly increasing transfer functions such as ``ReLU'', the induced divergence does not satisfy the necessity condition in Property (III):\, $D_F(\ah, \a) = 0$ if $\ah = \a$ (but not only if).}
\label{tab:transfer-ext}
\vspace{-0.25cm}
\begin{center}
\begin{small}
\begin{sc}
\resizebox{1\textwidth}{!}{
\begin{tabular}{lccc}
\toprule
Name & Transfer Function $f(\a)$ & Convex Integral Function $F(\a)$ & Note\\
\midrule
Step Function & $\sfrac{1}{2}\,(1 + \sign(\a))$ & $\sum_i \max(a_i, 0)$ & --\\
Linear & $\a$ & $\sfrac{1}{2}\,\Vert\a\Vert^2$ & --\\
(Leaky) ReLU & $\max(\a, 0) -\beta \max(-\a, 0)$& $\sfrac{1}{2}\,\sum_i a_i \big(\max(a_i, 0) -\beta \max(-a_i, 0)\big)$ & $\beta\geq 0$\\
Sigmoid & $(1 + \exp(-\a))^{-1}$ & $\sum_i\big(a_i + \log(1 + \exp(-a_i))\big)$& --\\
Softmax & $\sfrac{\exp(\a)}{\sum_i \exp(a_i)}$ & $\log\sum_i\exp(a_i)$ & --\\
Hyperbolic Tan & $\tanh(\a)$ & $\sum_i\log\cosh(a_i)$& -- \\
Arc Tan & $\arctan(\a)$ & $\sum_i \big(a_i\arctan(a_i) - \log\sqrt{1 + a_i^2}\big)$& -- \\
SoftPlus & $\log(1 + \exp(\a))$ & $-\sum_i\mathrm{Li}_2(-\exp(a_i))$ & $\mathrm{Li}_2\coloneqq$ Spence's func.\\
ELU & $[f(\a)]_i = \begin{cases} a_i &  a_i\geq 0\\ \beta(\exp a_i - 1) & \text{otherwise} \end{cases}$ & $\sum_i \big(\sfrac{a_i^2}{2}\, \mathbb{I}(a_i\geq 0) + \beta(\exp a_i - a_i - 1)\big)\, \mathbb{I}(a_i <  0)\big) $ & $\beta\geq 0$\\
\bottomrule
\end{tabular}
}
\end{sc}
\end{small}
\end{center}
\vskip -0.2in
\end{table*}
\section{MISSING PROOFS}

\propconvex*
\begin{proof}
The proof simply follows from the construction of the matching loss; that is, the convexity of Bregman divergence $D_{F_m}\!(\ah_m, \a_m) = L_{f_m}\!(\W_m\,\yh_{m-1}, f_m^{-1}(\y_m))$ in the first argument $\ah_m$ and the fact that $\ah_m$ is a linear function of $\W_m$, i.e., $\ah_m = \W_m\, \yh_{m-1}$.
\end{proof}

\locompreform*
\begin{proof}
Consider the LocoProp-M update:
\begin{align}
    \W^{\new}_m & = \W_m - \eta\, \big(f_m(\W^{\new}_m\yh_{m-1}) - \y_m\big)\,\yh_{m-1}^\top\, .\nonumber
\intertext{Writing $\W^{\new}_m = \W_m + \Delt_m$, we have}
    \Delt_m & = - \eta\, \big(f_m((\W_m + \Delt_m)\yh_{m-1}) - \y_m\big)\,\yh_{m-1}^\top\label{eq:matching_fixed-delta}
\intertext{Assuming that $\Vert\Delt_m\Vert \ll \Vert\W_m\Vert$, we can approximate the rhs as}
    \Delt_m &\approx - \eta\, \big(f_m(\W_m \yh_{m-1}) + \H_{F_m}\Delt_m\yh_{m-1} - \y_m\big)\,\yh_{m-1}^\top\nonumber\\
    & = -\eta\, \H_{F_m}\Delt_m\yh_{m-1}\yh_{m-1}^\top - \eta_e\nabla_{\ah_m}L(\y, \yh)\yh_{m-1}^\top\nonumber
    \intertext{where $\H_{F_m} \coloneqq \nabla^2 F_m$ is the Hessian. Rearranging the terms, we can write}
    \Delt_m & + \eta\, \H_{F_m}\Delt_m\yh_{m-1}\yh_{m-1}^\top = -\eta_e\nabla_{\W_m}L(\y, \yh)\nonumber
    \end{align}
    \begin{align}
    \intertext{Or alternatively}
    & \H_{F_m}^{-1}\Delt_m + \eta\,\Delt_m\yh_{m-1}\yh_{m-1}^\top = -\eta_e\,\H_{F_m}^{-1}\nabla_{\W_m}L(\y, \yh)\label{eq:matching_fixed-sylv}
    \intertext{Eq.~\eqref{eq:matching_fixed-sylv} corresponds to a Sylvester equation~\citep{birkhoff2017survey}. The solution to this equation can be written in the vectorized form as}
    \bm{\delta}_m & = \vect \Delt_m = -\eta_e\,(\I \otimes \H_{F_m}^{-1} + \eta\, \yh_{m-1}\yh_{m-1}^\top \otimes \I)^{-1}\, (\I \otimes \H_{F_m}^{-1}) \vect(\nabla_{\W_m}L(\y, \yh))
\end{align}
Note that LocoProp-S assumes a flat geometry for which $\H_{F_m} = \I_d$ and Eq.~\eqref{eq:matching_fixed-sylv} reduces to the implicit gradient update obtained from the closed-form solution of LocoProp-S.
\end{proof}

\propfinal*
\begin{proof}
For a MD target with $\gamma=1$ at the last layer $M$, we have
\[
\y_M = \yh - \nabla_{\ah_M} L_{f_M}\!(\y, \yh) = \yh  \;-\!\!\!\!\!\!\!\!\!\! \underbrace{(\yh - \y)}_{\text{gradient of matching loss}}\!\!\!\!\!\!\!\!\!\!=\; \y \, .
\]
\vspace{-0.5cm}

Thus, the objective in Eq.~\eqref{eq:matching_obj} corresponds to minimizing the final loss $L_{f_M}\!(\y, \yh)$ plus the regularizer term with respect to $\W_M$.
\end{proof}

\section{LOCOPROP-S RECOVERS PROXPROP}
ProxProp~\citep{proximal} is motivated by approximating the proximal mapping~\citep{moreau1965proximite} of a function, which is commonly known as the implicit update~\citep{hassibi,pnorm}. ProxProp is motivated by first forming updated pre and post-activations recursively starting from the layer $M\!-\!1$ (one before the last layer):
\[
    \yh_{M-1}^+ = \yh_{M-1} - \tau\,\nabla_{\yh_{M-1}} L(\y, \yh)\, ,
\]
and for the layers below as,
\begin{align*}
    \ah_{m}^+ & = \ah_m - \nabla_{\ah_m} f_m(\ah_m) \big(f_m(\ah_m) - \yh_{m}^+\big)\, , \,\text{ for\, } m\in [M\!-\!1]\, ,\\
    \yh_{m-1}^+ & = \yh_{m-1} - \nabla_{\yh_{m-1}} \Big(\frac{1}{2}\Vert \W_m\yh_{m-1} - \ah^+_m\Vert^2\Big)\,, \,\text{ for\, } m \in [M\!-\!1] - \{1\}\, .
\end{align*}
The ProxProp then proceeds by first updating the weights of the last layer using gradient descent,
\[
\W_M^{\new} = \W_M - \tau\,\nabla_{\W_M} L(\y, \yh)\, .
\]
The remaining weights are then updated by as minimizing a squared loss between the current pre-activations and the updated pre-activations plus a squared regularizer terms on the wights:
\[
\W^{\new}_m = \argmin_{\Wt}\big\{\sfrac{1}{2}\, \Vert\Wt\yh_{m-1} - \ah^+_m\Vert^2 + \sfrac{1}{2\tau_m}\, \Vert \Wt - \W_m\Vert^2\big\}\, ,\, m \in [M\!-\!1]\,.
\]
At first glance, due to the convoluted formulation of the updated activations, it is unclear what the objective of ProxProp is chasing after. Although not mentioned in the original ProxProp paper, after some simplification, we can rewrite the updated activations as a gradient step on the current values of the activations:
\begin{align*}
    \yh_m^+ & = \yh_m - \tau\,\nabla_{\yh_M} L(\y, \yh)\, ,\\
    \ah_m^+ & = \ah_m - \tau\,\nabla_{\ah_M} L(\y, \yh)\,,\, \text{ for\, } m\in [M\!-\!1]\, .
\end{align*}
With this simplification, we can recognize that the updated activations (i.e., the targets, in our terminology) can in fact be formed in parallel for all layers after the backward pass. Thus, the LocoProp-S and ProxProp formulations become equivalent except for updating the weights of the last layer; ProxProp applies vanilla gradient descent while LocoProp-S uses a local loss function for updating these weights as well, which recovers ProxProp when the number of local iterations is one.

This example further illustrates how our LocoProp formulation, which minimizes a loss to a target plus a regularizer on the weights, simplifies the construction and allows developing several extensions by adjusting each component.

\section{LOCOPROP-S CLOSED-FORM UPDATE RESEMBLES K-FAC PRECONDITIONERS}
\label{app:natural}
There are several types of preconditioners such as K-FAC and Shampoo that one could use to train neural networks. In this section, we make a connection to the K-FAC \citep{martens2015optimizing} update rule, which applies a Kronecker factored approximation to the Fisher information matrix~\citep{amari}. Let $\w$ be the vectorized form of matrix $\W$. Recall that the Fisher information matrix is defined as
\begin{equation}
    \label{eq:fisher}
    \bm{F}(\w)  =  \mathbb{E}_{\substack{\textbf{ $ \x \thicksim$ data} \\ \textbf{$\y \thicksim p(\y|\x,\w)$}}}\!\! \left[ \nabla_{\w}   L(\y,\yh)  \nabla_{\w}  L(\y,\yh)^\top \right]\, ,
\end{equation}
where the expectation is over the data and the model's predictive distribution. The natural gradient update~\citep{amari} is then defined as
\begin{equation}
    \label{eq:ngd}
\w^+ = \w - \eta\, \bm{F}(\w)^{-1}\nabla_{\w} L(\y,\yh)\, .
\end{equation}

For a single fully connected layer $\W_m \in \R^{d\times n}$ where $m\in[M]$, the gradient, denoted by $\bm{G}_m\in\R^{d\times{}n}$ can be obtained via the chain
rule as $\bm{G}_m = \nabla_{\ah_m}  L(\y, \yh)\, \yh_{m-1}^\top$, which in the vectorized form can be written as: $
\nabla_{\ah_m}  L(\y, \yh) \otimes \yh_{m-1} $.
We can then write the Fisher information matrix as
\begin{align*}
   {\bm{F}(\w_m)} &= \mathop{\mathbb{E}} \left[ (\nabla_{\ah_m}  L(\y, \yh) \otimes \yh_{m-1} ) \, ( \nabla_{\ah_m}  L(\y, \yh) \otimes \yh_{m-1} )^\top \right]\\
    &=  \mathop{\mathbb{E}} \left[ (\nabla_{\ah_m}  L(\y, \yh)\,  \nabla_{\ah_m}  L(\y, \yh)^\top ) \otimes \, ( \yh_{m-1}\, \yh_{m-1}^\top) \right]\, .
\end{align*}
Assuming independence between $ \nabla_{\ah_m}  L(\y, \yh) $ and $\yh_{m-1}$, K-FAC rewrites the Fisher in a tractable form as
\[
    {\bm{F}(\w_m)}
    \approx \mathop{\mathbb{E}} \left[ (\nabla_{\ah_m}  L(\y, \yh)\,  \nabla_{\ah_m}  L(\y, \yh)^\top ) \right] \otimes \, \mathop{\mathbb{E}} \left[\yh_{m-1}\, \yh_{m-1}^\top \right] \, .
\]

Let $\bm{D} \doteq \mathbb{E} \left[ (\nabla_{\ah_m}  L(\y, \yh)\,  \nabla_{\ah_m}
 L(\y, \yh)^\top ) \right]$ and $\bm{X} \doteq \mathbb{E} \left[\yh_{m-1}\, \yh_{m-1}^\top
\right]$. Then the K-FAC update rule can be simplified as:
\begin{equation}
    \label{eq:kfac}
  \W_m^{+} \approx \W_m - \eta\, \bm{D}^{-1} \bm{G}_m \bm{X}^{-1} \, .
\end{equation}
The prototypical implementation of K-FAC uses the moving average of the statistics over training batches. In Section~\ref{sec:locos}, we noted that how the exact solution of the squared loss in Eq.~\eqref{eq:squared_obj} emerges as a preconditioner that is similar to the right preconditioner of K-FAC in Eq.~\eqref{eq:kfac}, formed by the outer-product of the input activations. (In practice, constant diagonal entries are added to matrix $\bm{X}$ to avoid numerical issues.) Interestingly, one could similarly recover the left preconditioner as well by setting the target pre-activations using a \emph{natural gradient} step on the pre-activations:
\[
    \a_m = \ah_m - \gamma\, \bm{D}^{-1} \nabla_{\ah_m} L(\y, \yh)\, ,
\]
where $\bm{D}$ acts as the Fisher Information matrix treating the pre-activations as parameters, in which case the expectation is conditioned on $\x$ and is taken over the model's  predictive distribution. Substituting for the targets yields the closed-form update rule for LocoProp-S with \emph{natural gradient descent targets}:
\begin{equation}
    \W_m^+ = \W_m - \eta_e\,\bm{D}^{-1}\,\nabla_{\W_m}  L(\y, \yh)\,\big(\I + \eta\,\yh_{m-1}\yh_{m-1}^\top\big)^{-1}\, .
\end{equation}
Our exposition here is to make the connection which may aid us in analyzing the properties of LocoProp in the future.

\begin{figure*}[t]
\vspace{-0.7cm}
\centering
\scalebox{0.78}{
    \begin{minipage}{0.63\linewidth}
\begin{algorithm}[H]
\caption{PocoProp-S: PocoProp Using Squared Loss}\label{alg:pocoprop-s}
\begin{algorithmic}
    \State \textbf{Input} weights $\{\W_m\}$ where $m \in [M]$ for an $M$-layer network, activation step size $\gamma$, weight learning rate $\eta$
\Repeat
    \State $\bullet$\, perform a \textbf{forward pass} and fix the \emph{inputs} $\{\yh_{m-1}\}$
    \State $\bullet$\, perform a \textbf{backward pass} and set the \emph{post GD targets}
    \vspace{-0.1cm}\[\y_{m} = \yh_m - \gamma\, \nabla_{\yh_m} L(\y, \yh)\]\vspace{-0.5cm}
    \For{each layer $m \in [M]$ \textbf{in parallel}}
    \vspace{0.05cm}
    \For{$T$ iterations}
    \vspace{-0.2cm}
    \State \[\W_m \gets \W_m - \eta\bm{J}_{f_m}^\top\big(f_m(\W_m\yh_{m-1}) - \y_m\big)\yh_{m-1}^\top\]
    \EndFor
    \EndFor
    \vspace{-0.18cm}
\Until{\,\,$\{\W_m\}$ not converged}\,\,
\end{algorithmic}
\end{algorithm}
\end{minipage}
\begin{minipage}{0.63\linewidth}
\begin{algorithm}[H]
\caption{PocoProp-M: PocoProp Using Matching Loss}\label{alg:pocoprop-m}
\begin{algorithmic}
    \State \textbf{Input} weights $\{\W_m\}$ where $m \in [M]$ for an $M$-layer network, activation step size $\gamma$, weight learning rate $\eta$
\Repeat
    \State $\bullet$\, perform a \textbf{forward pass} fix the \emph{inputs} $\{\yh_{m-1}\}$
    \State $\bullet$\, perform a \textbf{backward pass} and set the \emph{dual MD targets} \vspace{-0.1cm}\[\a_{m} = \ah_m - \gamma\, \nabla_{\yh_m} L(\y, \yh)\]\vspace{-0.5cm}
    \For{each layer $m \in [M]$ \textbf{in parallel}}
    \vspace{0.05cm}
    \For{$T$ iterations}
    \vspace{-0.2cm}
    \State \[\W_m \gets \W_m - \eta\, \bm{J}_{f_m}^\top\big(\W_m\yh_{m-1} - \a_m\big)\,\yh_{m-1}^\top\]
    \EndFor
    \EndFor
    \vspace{-0.18cm}
\Until{\,\,$\{\W_m\}$ not converged}\,\,
\end{algorithmic}
\end{algorithm}
\end{minipage}
}
\vspace{-0.1cm}
\end{figure*}

\section{ALTERNATE LOCOPROP FORMULATIONS}
We introduce two additional formulations of LocoProp based on the post-activations, for which we defer the experimental evaluation to future work. Similarly, the first formulation is based on the squared loss whereas the second formulation uses the dual form of the Bregman divergence used in LocoProp-M. In both cases, a single local iteration recovers BackProp. These two variants are given in Algorithm~\ref{alg:pocoprop-s} and~\ref{alg:pocoprop-m}.

\subsection{A Variant of LocoProp-S Based on Post-activations}
We define a LocoProp-S variant by first setting the targets as $\y_m = \yh_m - \gamma\, \nabla_{\yh_m} L(\y, \yh)$ for $m \in [M]$, where $\gamma > 0$ is the activation step size. We refer to these targets as \emph{post GD targets}. Next, the local optimization problem is defined as
\begin{equation}
    \label{eq:post_squared_obj}
    \W^{\new}_m = \argmin_{\Wt}\, \big\{\sfrac{1}{2}\, \Vert f_m(\Wt\yh_{m-1}) - \y_m\Vert^2 + \sfrac{1}{2\eta}\, \Vert \Wt - \W_m\Vert^2\big\}\, .
\end{equation}
Similarly, $\eta > 0$ controls the trade-off between minimizing the loss and the regularizer. By setting the derivative of the objective to zero, the fixed point iteration for $\W_m^+$ can be written as,
\begin{equation}
\label{eq:post_gd_fixed}
    \W^{\new}_m = \W_m - \eta\, \bm{J}_{f_m}\!(\W^{\new}_m\yh_{m-1})^\top\big(f_m(\W^{\new}_m\yh_{m-1}) - \y_m\big)\,\yh_{m-1}^\top\, ,
\end{equation}
where $\bm{J}_{f_m}\!(\ah) = \frac{\partial f_m(\ah)}{\partial \ah}$ is the Jacobian of the transfer function $f_m$ (which is many cases is a diagonal matrix). Note that for the choice of post GD targets, a single fixed point iteration again recovers vanilla BackProp,
\begin{align}
\W^{\new}_m & \approx \W_m - \eta\, \bm{J}_{f_m}\!(\W_m\yh_{m-1})^\top\big(f_m(\W_m\yh_{m-1}) - \y_m\big)\,\yh_{m-1}^\top\nonumber\\
& = \W_m - \eta\, \bm{J}_{f_m}\!(\W_m\yh_{m-1})^\top\big(f_m(\W_m\yh_{m-1}) - (f_m(\W_m\yh_{m-1}) - \gamma\, \nabla_{\yh_m} L(\y, \yh))\big)\,\yh_{m-1}^\top\nonumber\\
& = \W_{m} - \eta\, \gamma\,
    \bm{J}_{f_m}\!(\W_m\yh_{m-1})^\top\nabla_{\yh_m} L(\y, \yh)\,\,\yh_{m-1}^\top\nonumber\\ & = \W_{m} - \eta_e\,
    \frac{\partial L(\y, \yh)}{\partial
       \ah_m}\frac{\partial \ah_m}{\partial \W_m}\nonumber\tag{BackProp}\,,
\end{align}
with an effective learning rate of $\eta_e = \eta\, \gamma > 0$. However in this case, the fixed point iteration in Eq.~\eqref{eq:post_gd_fixed} does not yield a closed-form solution in general and therefore, should be solved iteratively. We refer to this local loss construction variant as \emph{P\sout{ost L}ocoProp-S} for short. Other variants of \mbox{PocoProp-S} can be obtained by replacing the post GD targets with post natural GD targets with respect to the post activations. We defer the analysis of such variants to future work.

\subsection{A Variant of LocoProp-M Based on Post-activations}
Similarly, we define the \emph{dual MD targets} as $\a_m = \ah_m - \gamma\, \nabla_{\yh_m} L(\y, \yh)$ for $m \in [M]$, where $\gamma > 0$ is the activation step size. The local optimization problem is then defined using the dual form of the Bregman divergence used in Eq.~\eqref{eq:matching_obj}, that is 
\begin{equation}
    \label{eq:post-matching_obj}
    \W^{\new}_m = \argmin_{\Wt}\, \big\{D_{F^*_m}\!\big(f_m(\Wt\yh_{m-1}), f_m(\a_m)\big) + \sfrac{1}{2\eta}\, \Vert \Wt - \W_m\Vert^2\big\}
\end{equation}
Similarly, $\eta > 0$ controls the trade-off between minimizing the loss and the regularizer. Setting the derivative of the objective to zero, we can write $\W_m^+$ as the solution of a fixed point iteration,
\begin{align}
\label{eq:post-matching_fixed}
    \W^{\new}_m & = \W_m - \eta\, \bm{J}_{f_m}\!(\W^{\new}_m\yh_{m-1})^\top\big(f^*_m(f_m(\W^{\new}_m\yh_{m-1})) - f^*_m(f_m(\a_m))\big)\,\yh_{m-1}^\top\nonumber\\
    & = \W_m - \eta\, \bm{J}_{f_m}\!(\W^{\new}_m\yh_{m-1})^\top\big(\W^{\new}_m\yh_{m-1} - \a_m\big)\,\yh_{m-1}^\top\, .
\end{align}
Interestingly, the first iteration again recovers BackProp with an effective learning rate of $\eta_e = \eta\, \gamma$. We refer to this LocoProp-M variant with dual MD targets as \emph{PocoProp-M}.

\subsection{Other Possible Variants Based on the Regularizer Term}
So far, we discussed LocoProp (and PocoProp) variants by considering different local loss functions as well as targets. A  tangential approach to such constructions is to consider other possible options for the regularizer term on the weights. A number of possible alternatives are $\mathrm{L}_1$-regularizer, i.e., $\Vert\Wt - \W_m\Vert_1$ (for sparse updates), and a local Mahalanobis distance based on the Fisher matrix, that is
\[
\sfrac{1}{2}\, (\widetilde{\w} - \w_m)^\top \bm{F}(\w_m) (\widetilde{\w} - \w_m)\, ,
\]
where $\widetilde{\w}$ and $\w_m$ are the vectorized forms of $\Wt$ and $\W_m$, respectively, and $\bm{F}(\w_m)$ is the Fisher information matrix, defined in Eq.~\eqref{eq:fisher}. Note that a single iteration of LocoProp and PocoProp variants with this local Mahalanobis distance corresponds to a natural gradient descent step on $\W_m$ (see Eq.~\eqref{eq:ngd}). Thus, computationally effective approximations of such variants may yield an improved convergence. We also defer the analysis of such variants to future work.

\section{TUNING PROTOCOL AND SEARCH SPACES}
We setup a search space for each experiment as shown in Table~\ref{tbl:firstorder}, \ref{tbl:locoprop}, and \ref{tbl:secondorder}. We use a Bayesian optimization package for hyper-parameter search to optimize for training loss. We use decoupled weight decay of $10^{-5}$ across all experiments. We run up to ~4096 trails for all experiments, except for the wider/deeper sized autoencoder where we run only up to 512 trials. Experiments are conducted on V100 GPUs. Our rough estimate for the total GPU time is $\sim$15k GPU hours.
\begin{table*}
\begin{minipage}{0.45\textwidth}
\centering
\setlength{\extrarowheight}{3.5pt}
\resizebox{0.8\textwidth}{!}{
\begin{tabular}{ccc}
\toprule
Hyper-parameter & Range & Scaling \\ \midrule
$\eta$ & $\srange{e-7}{0.1}$ & Log \\ \hline
$1 - \beta_1$ & $\srange{e-3}{0.1}$ & Log \\ \hline
$1 - \beta_2$ & $\srange{e-3}{0.1}$ & Log \\ \hline
$\epsilon$ & $\srange{e-10}{e-5}$ & Log \\ \bottomrule
\end{tabular}
}
\vspace{0.1cm}
\caption{Search space for comparing first-order optimizers. Note that the hyper-parameters are dropped when not applicable (e.g., $\beta_2$ for non-adaptive methods).}
\label{tbl:firstorder}
\end{minipage}\,\,\,\,\,
\begin{minipage}{0.45\textwidth}
\vspace{-0.76cm}
\centering
\setlength{\extrarowheight}{3.5pt}
\resizebox{0.8\textwidth}{!}{
\begin{tabular}{ccc}
\toprule
Hyper-parameter & Range & Scaling \\ \midrule
$\eta$ & $\srange{e-7}{10}$ & Log \\ \hline
$1 - \beta_1$ & $\srange{e-3}{0.1}$ & Log \\ \hline
$1 - \beta_2$ & $\srange{e-3}{0.1}$ & Log \\ \hline
$\epsilon$ & $\srange{e-10}{e-1}$ & Log \\ \bottomrule
\end{tabular}
}
\vspace{0.1cm}
\caption{Search space for Shampoo and \mbox{K-FAC.}}
\label{tbl:secondorder}
\end{minipage}
\end{table*}

\begin{table}[h!]
\centering
\setlength{\extrarowheight}{3.5pt}
\resizebox{0.65\textwidth}{!}{
\begin{tabular}{ccc}
\toprule
Hyper-parameter & Range & Scaling \\ \midrule
$\gamma$ & [$10,1\times10^2,5\times10^2,10^3,2\times10^3,3\times10^3,5\times10^3$] & Log \\ \hline
$\eta$ & $\srange{e-7}{0.1}$ & Log \\ \hline
$1 - \beta_1$ & $\srange{e-3}{0.1}$ & Log \\ \hline
$1 - \beta_2$ & $\srange{e-3}{0.1}$ & Log \\ \hline
$\epsilon$ & $\srange{e-10}{e-5}$ & Log \\ \bottomrule
\end{tabular}
}
\vspace{0.2cm}
\caption{Search space for LocoProp (both variants). Base optimizer is RMSProp.}
\label{tbl:locoprop}
\end{table}



\begin{figure}[h!]
\vspace{-0.3cm}
\begin{center}
    \subfigure[MNIST with ReLU transfer function.]{\includegraphics[height=0.28\linewidth]{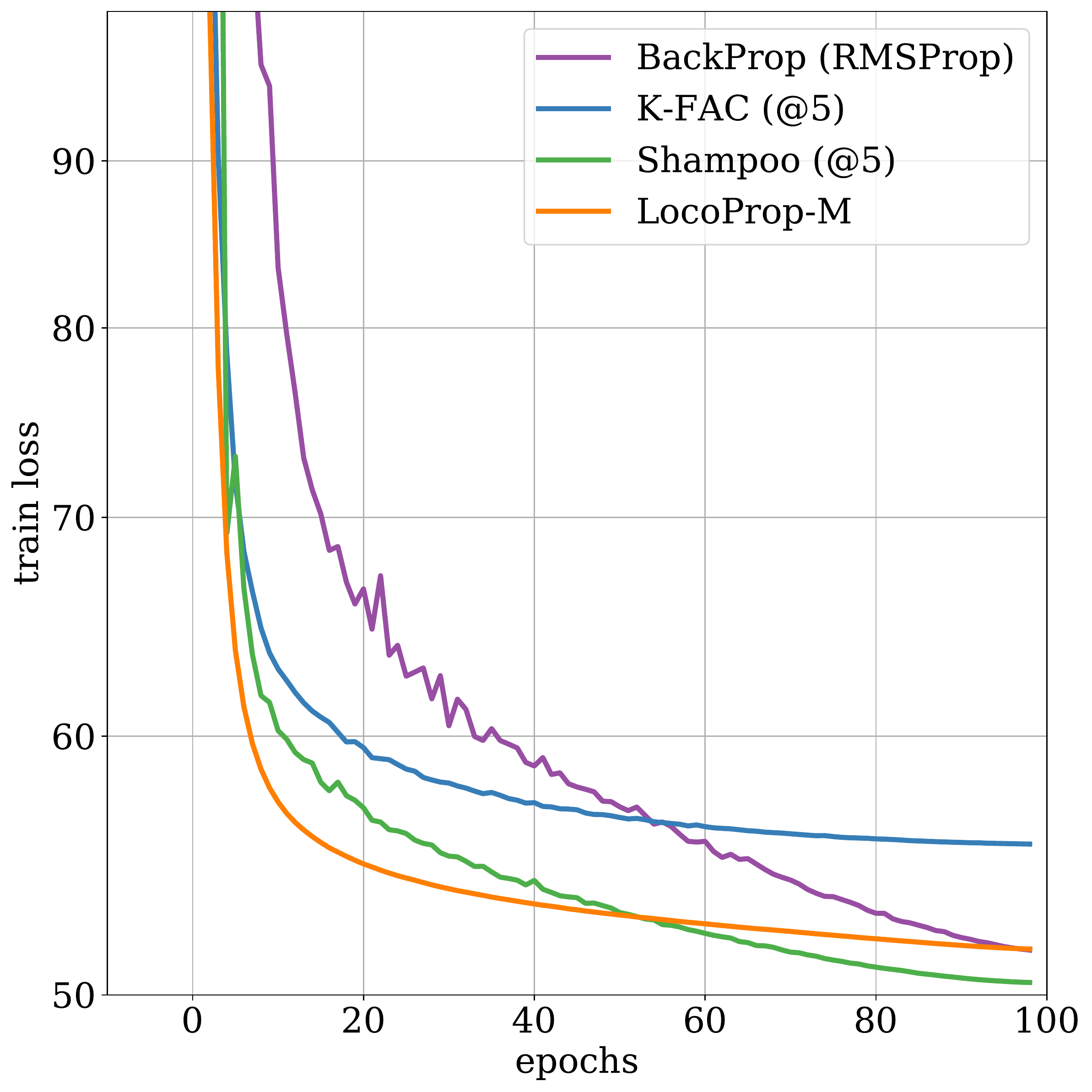}}  
    \hspace{0.1cm}    
    \subfigure[Fashion MNIST with $\tanh$ transfer function.]{\includegraphics[height=0.28\linewidth]{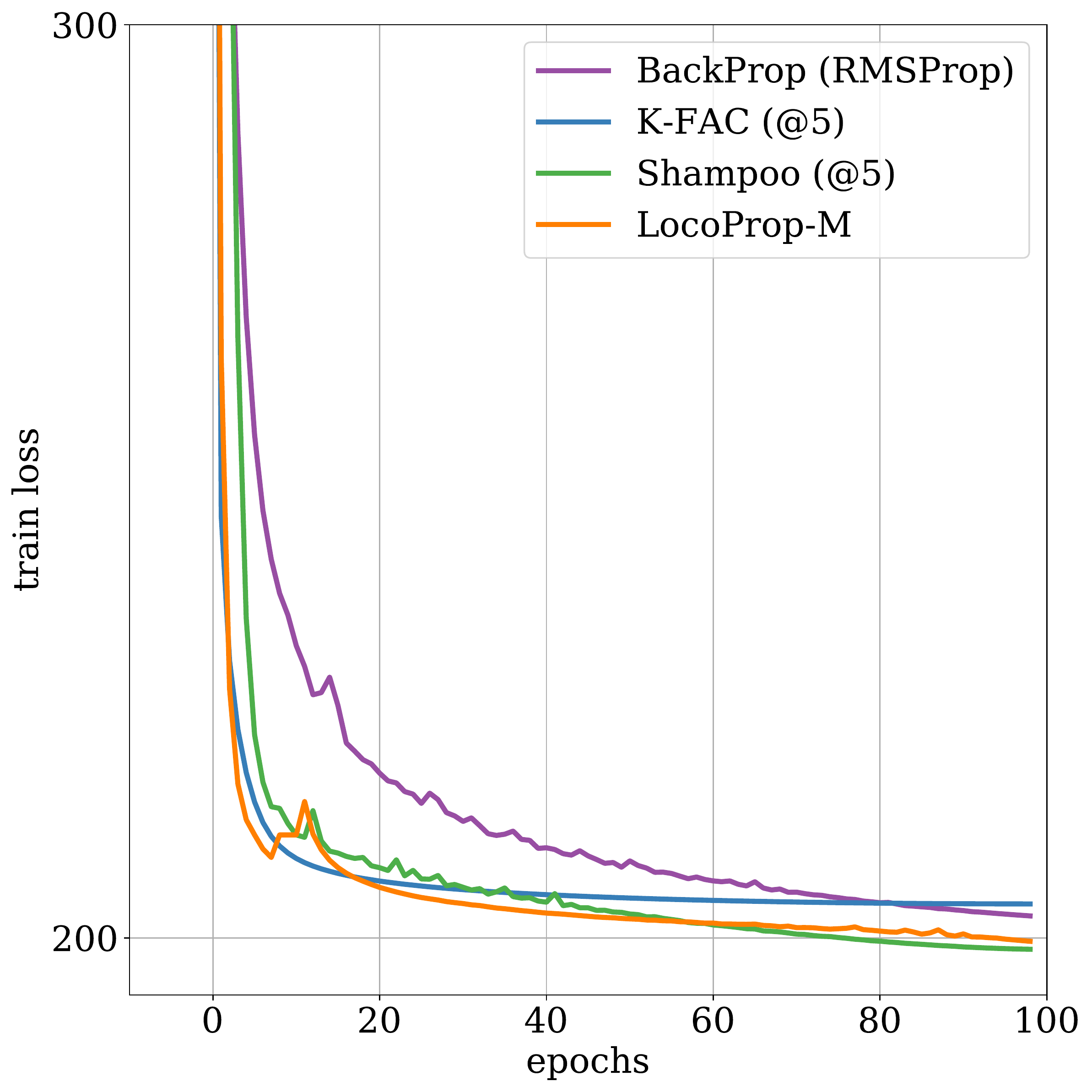}}
    \hspace{0.1cm}    
    \subfigure[CURVES with $\tanh$ transfer function.]{\includegraphics[height=0.28\linewidth]{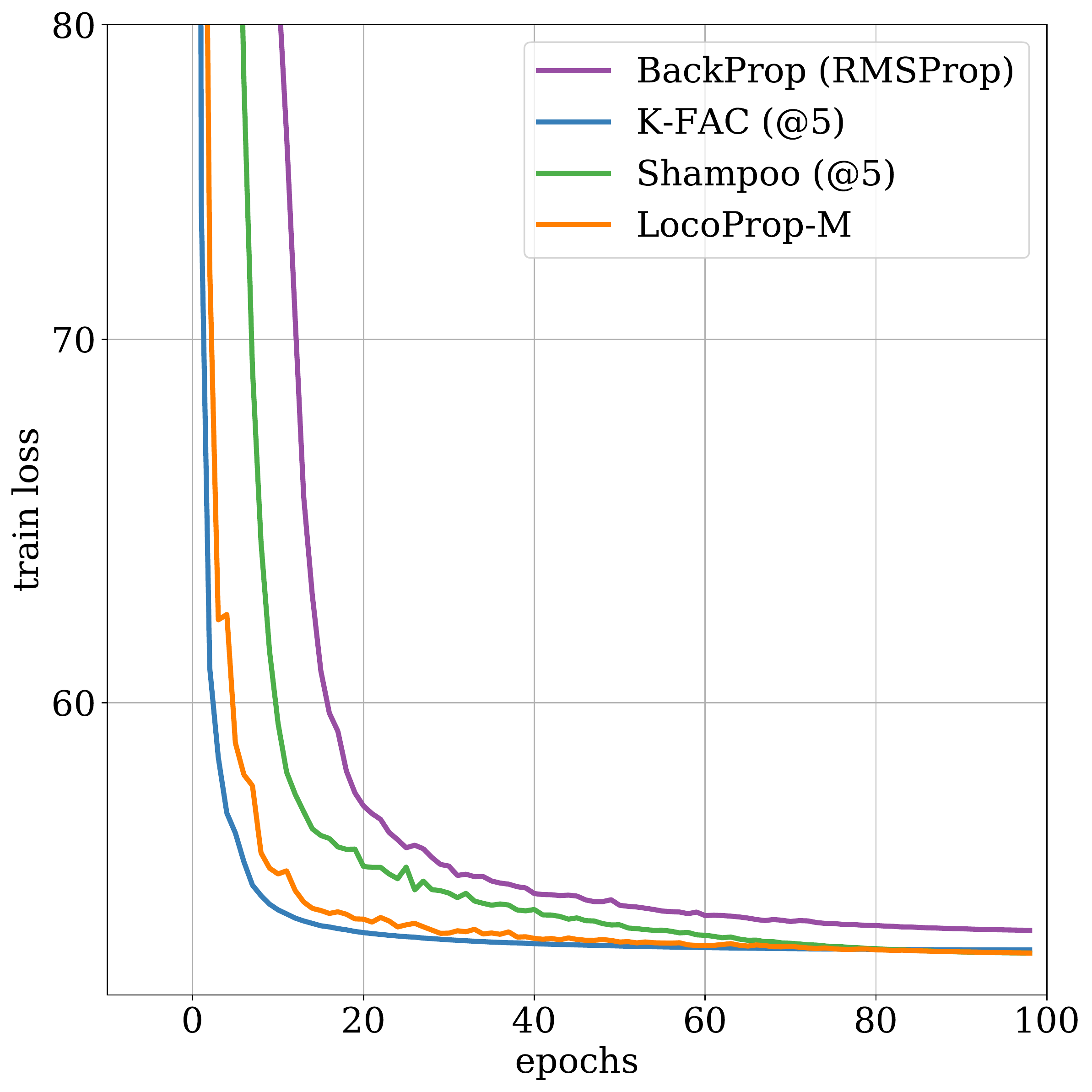}}\\[-0.1cm]
    \subfigure[Test loss on MNIST with ReLU transfer function.]{\includegraphics[height=0.28\linewidth]{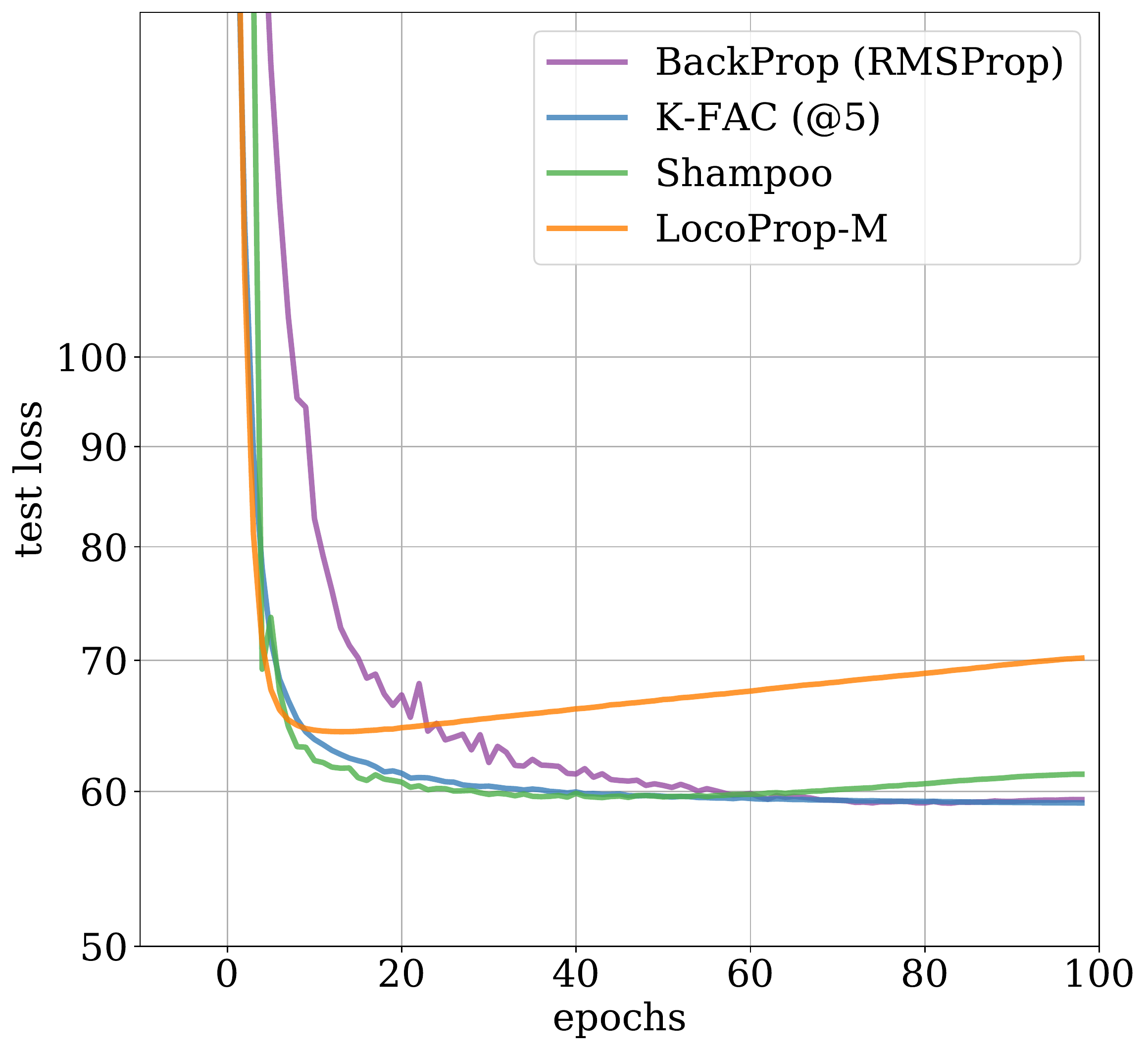}}    
    \hspace{0.1cm}  
    \subfigure[Test loss on Fashion MNIST with $\tanh$ transfer function.]{\includegraphics[height=0.28\linewidth]{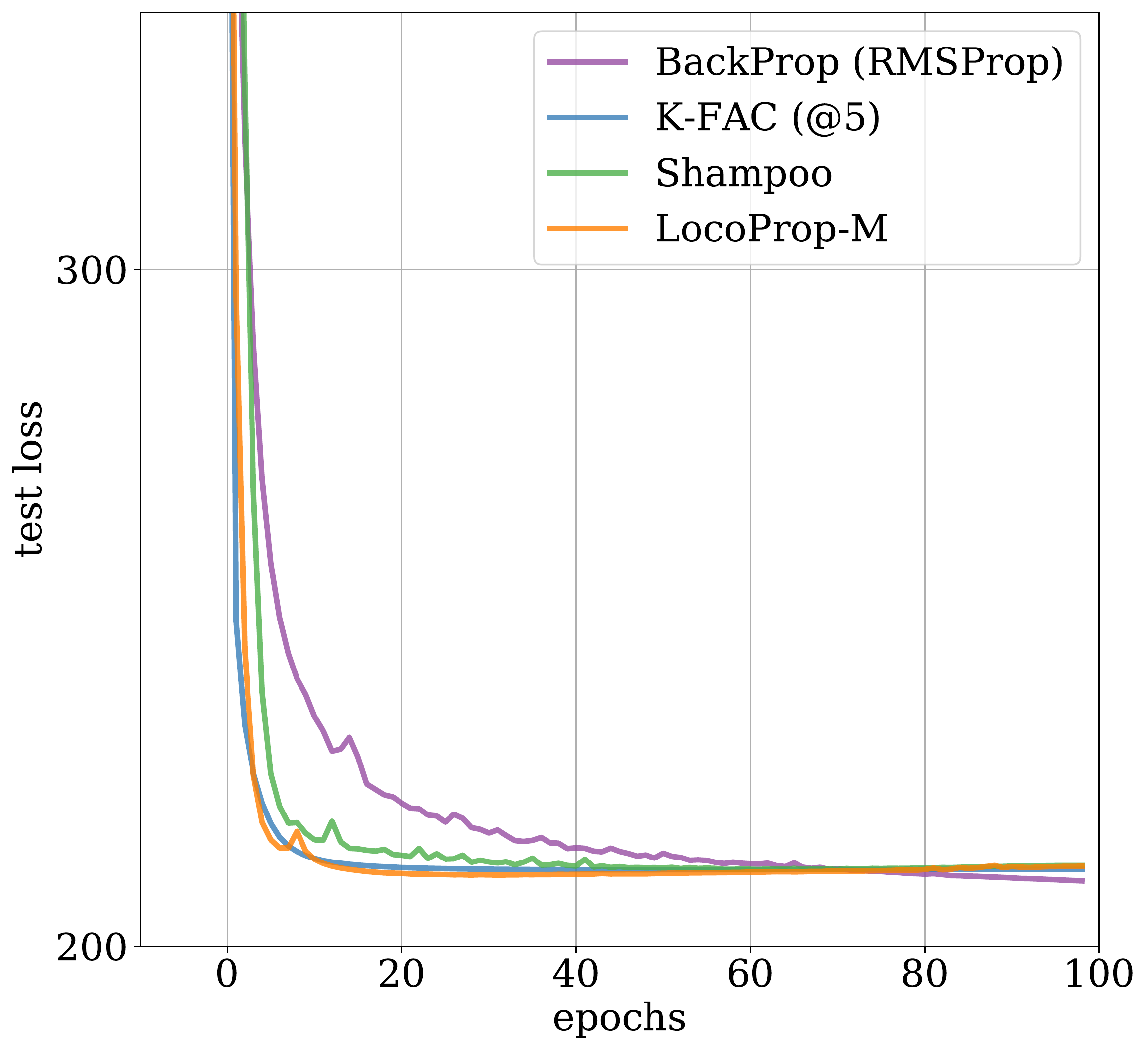}}  
    \hspace{0.1cm}    
    \subfigure[Test loss on CURVES with $\tanh$ transfer function.]{\includegraphics[height=0.28\linewidth]{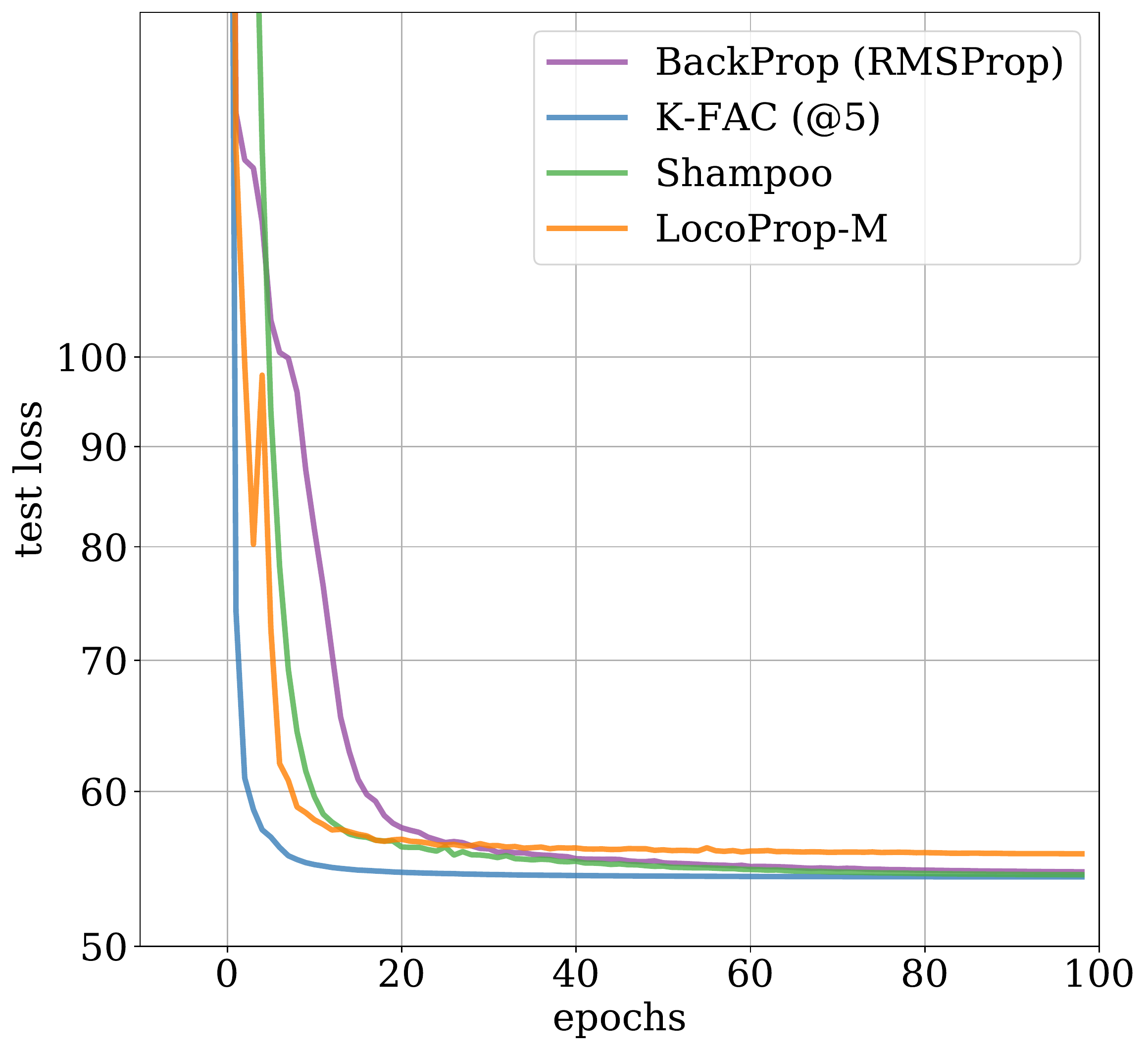}}  
    \vspace{-0.25cm}
    \caption{Results on different transfer functions and datasets. Comparisons on the standard autoencoder: (a) MNIST dataset and ReLU transfer function, (b) Fashion MNIST and (c) CURVES datasets using $\tanh$ transfer function. @k indicates the interval for carrying out inverse ($p$th root) operation.}
    \label{fig:relu_fmnist_curves}
    \end{center}
\end{figure}
\section{RESULTS ON RELU AND OTHER DATASETS}
We provide additional results using ReLU transfer function on the MNIST dataset in Figure~\ref{fig:relu_fmnist_curves}(a). Interestingly, K-FAC performs poorly on this problem and converges to a worse solution than RMSProp. LocoProp-M also performs well initially, but converges to a similar solution to RMSProp. Shampoo outperforms all other methods on this problem.

We also show results on the Fashion MNIST and CURVES datasets (using the standard autoencoder with $\tanh$ transfer function) in Figure~\ref{fig:relu_fmnist_curves}(b) and~\ref{fig:relu_fmnist_curves}(c), respectively. Again, K-FAC performs poorly on Fashion MNIST, but works well on the CURVES dataset. LocoProp-M performs closely to the best performing second-order method.

Lastly, we also show the test loss in Figure~\ref{fig:relu_fmnist_curves}(d)-(f) for these problems. The test loss results show signs of overfitting in all cases. The main reason for overfitting is that, in this work, we only focus on optimizing the fixed loss and thus, do not tune the weight decay regularizer term, which is set to ${1}\mathrm{e}{-5}$ in the default autoencoder definition. Similar behavior on this problem has been observed in previous work using the default hyper-parameters~\citep{goldfarb2020practical}.

\begin{figure}[h!]
\begin{center}
    \subfigure[\mbox{LocoProp}-S and \mbox{LocoProp}-M has comparable performance to second-order methods even with a smaller batch size.]{\includegraphics[height=0.28\linewidth]{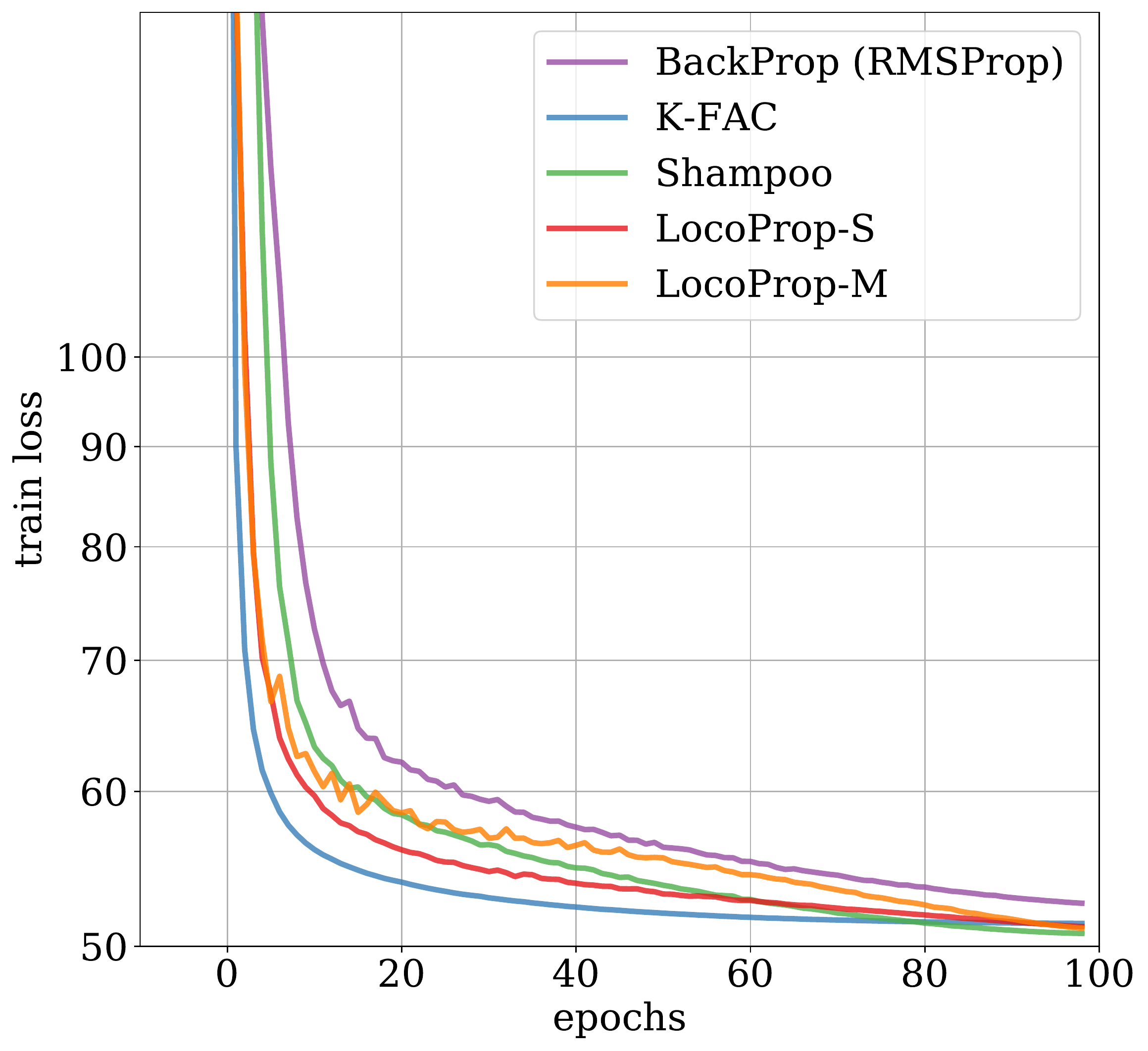}}
    \hspace{0.1cm}    
    \subfigure[Test loss on the same problem for different methods.]{\includegraphics[height=0.28\linewidth]{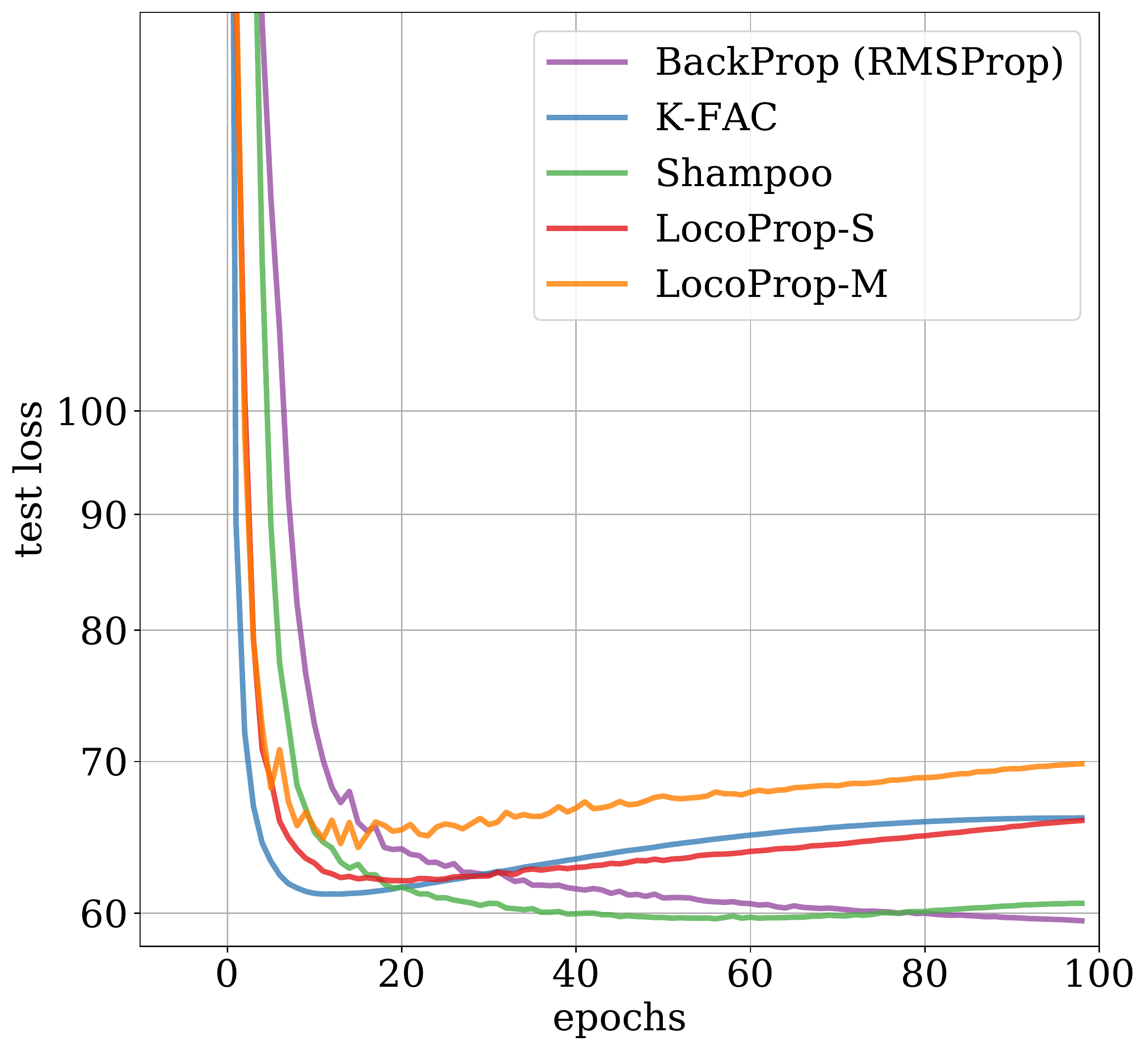}}
    \vspace{-0.25cm}
    \caption{MNIST autoencoder with $\tanh$ transfer function trained with batch size of 100.}
    \label{fig:mnist_batch_size}
    \end{center}
\end{figure}

\begin{figure}[h!]
\vspace{-0.3cm}
\begin{center}
    \subfigure[Test loss on the standard autoencoder with $\tanh$ transfer function.]{\includegraphics[height=0.28\linewidth]{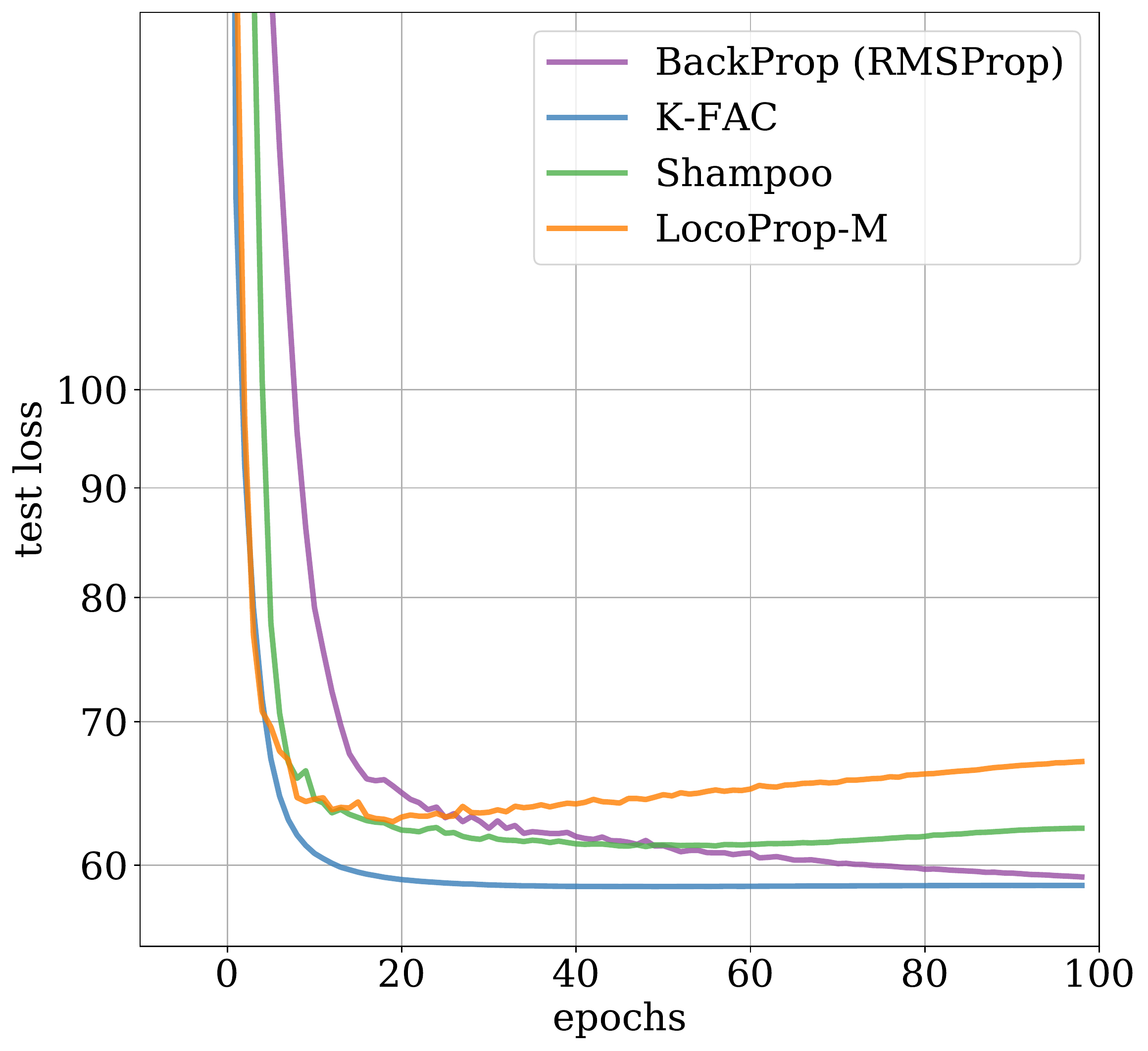}}
    \hspace{0.1cm} 
    \subfigure[Test loss on the wide autoencoder with $\tanh$ transfer function.]{\includegraphics[height=0.28\linewidth]{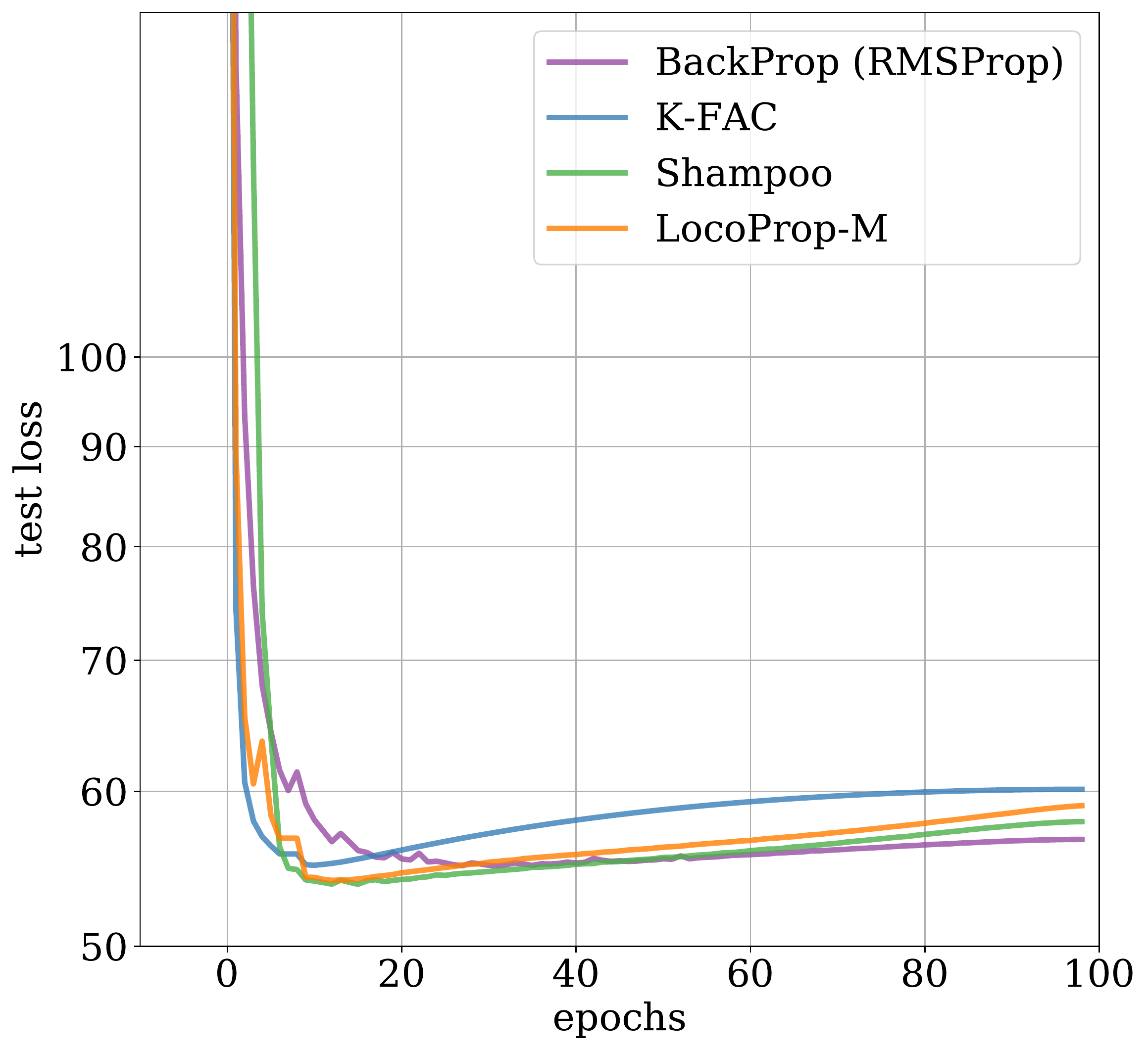}}  
    \hspace{0.1cm}    
    \subfigure[Test loss on the deep autoencoder with $\tanh$ transfer function.]{\includegraphics[height=0.28\linewidth]{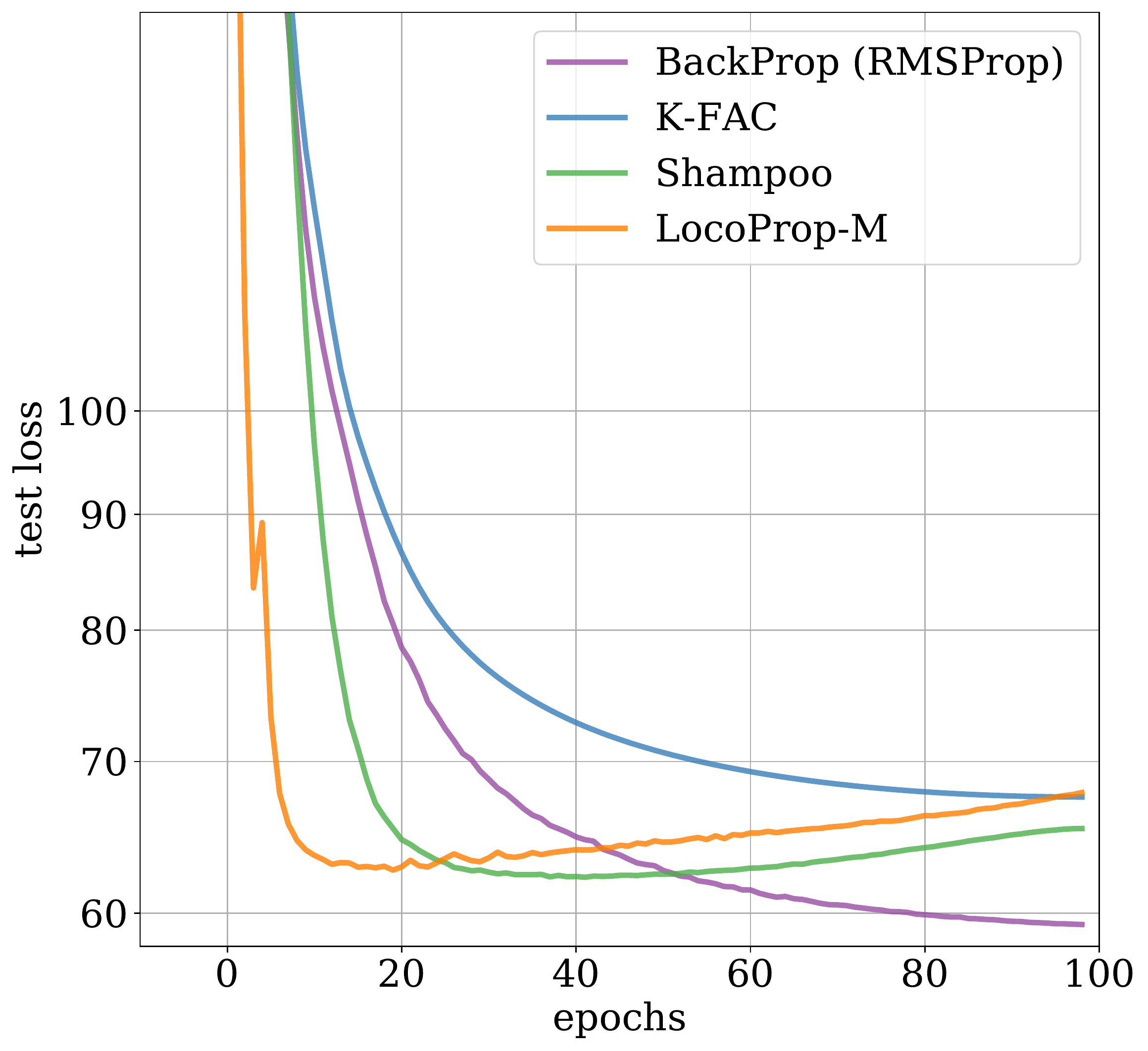}}
    \vspace{-0.25cm}
    \caption{Test loss results on the MNIST dataset with $\tanh$ \mbox{transfer} function. Comparisons on: (a) standard, (b) wide, and (c) deep autoencoder variants.}
    \label{fig:mnist_test_tanh}
    \end{center}
\end{figure}

\section{RESULTS WITH A SMALLER BATCH SIZE}
We train the standard autoencoder on the MNIST dataset with a batch size of 100. Results are presented in Figure~\ref{fig:mnist_batch_size}, which look similar to the earlier results at batch size 1000. One thing that stands out is that LocoProp-S works just as well in this setting as LocoProp-M.

\section{TEST LOSS FOR STANDARD, WIDE, AND DEEP VARIANTS}
We provide the plots of the test loss for all variants of the autoencoder model in Figure~\ref{fig:mnist_test_tanh}. In general, the autoencoder model does not generalize well when trained with second-order methods or LocoProp. As discussed earlier, this behavior is due to an undertuned weight decay regularizer parameter for each case. In this work, we only focus on minimizing the training loss and defer the analysis of the generalization properties of LocoProp to future work.

\section{HYPER-PARAMETERS FOR EXPERIMENTS}
The entire hyper-parameter sweeps along with the sensitivity analysis, training and test losses, and the code to reproduce the results is available at \url{https://github.com/google-research/google-research/tree/master/locoprop}.
\end{document}